%% file: main.tex
\documentclass[lettersize,journal]{IEEEtran}

\input{defs}

\title{Biconvex Optimization for Smooth Minimum-Time Trajectories around Convex Obstacles}

\author{Peter Werner\textsuperscript{1}, Tobia Marcucci\textsuperscript{2}, and Daniela Rus\textsuperscript{1}
\thanks{\textsuperscript{1}Massachusetts Institute of Technology Computer Science and Artificial Intelligence Laboratory \quad \textsuperscript{2}University of California Santa Barbara}
}
\date{July 2026}

\IEEEoverridecommandlockouts
\begin{document}
\maketitle
\begin{abstract}
We present a biconvex approach for minimum-time motion planning around convex obstacles that is guaranteed to converge, is anytime, and supports derivative constraints to arbitrary order. We jointly convexify the minimum-time objective and all derivative constraints through a change of variables, and handle collision avoidance via time-varying separating planes, reducing the problem to a biconvex program. This program is solved by alternating between computing maximum-margin separating planes and optimizing the trajectory. By only adding planes for obstacles that the current iterate collides with, the trajectory can jump around obstacles and escape local minima. The method is guaranteed to converge starting from a simple collision-free polygonal curve. In our experiments on drone navigation and dual-arm bin unloading, we find that the proposed method reliably produces high-quality trajectories with computation times comparable to state-of-the-art decomposition-based motion planners, while handling a larger class of problems and being substantially more robust to bad initialization. Project page: \href{https://wernerpe.github.io/bmtp-website/}{wernerpe.github.io/bmtp-website/}.
\end{abstract}

\input{sections/introduction}
\input{sections/problem_statement}
\input{sections/convexity_background}
\input{sections/minimum_time}

\input{sections/collision_avoidance}
\input{sections/sequential_convex_programming}

\input{sections/discretization}

\input{sections/experiments}

\input{sections/discussion.tex}
\input{sections/software.tex}
\input{sections/conclusion.tex}

\input{sections/acknowledgements.tex}

\bibliographystyle{IEEEtran}
\bibliography{biblio}
\appendix
\input{sections/appendix}
\end{document}

%% file: defs.tex
\usepackage{graphicx,psfrag,amsmath,amsfonts,verbatim,tikz}
\usepackage{amsthm}
\usepackage{booktabs}
\usepackage{xcolor}
\usepackage{hyperref}
\usepackage{tikz}
\usepackage{multicol}
\usepackage{listings}
\usepackage{bbm}
\usepackage{fancyvrb}
\usepackage[most]{tcolorbox}
\usepackage{multirow}
\usepackage[ruled,vlined]{algorithm2e}
\usepackage{algorithmicx}
\usepackage{amssymb}%
\usepackage{stmaryrd}
\usepackage{pifont}
\newcommand{\inR}{\in \mathbb{R}}
\newcommand{\R}{\mathbb{R}}

\newcommand{\calC}{\ensuremath{\mathcal{C}}}

\newcommand{\calH}{\ensuremath{\mathcal{H}}}
\newcommand{\calI}{\ensuremath{\mathcal{I}}}

\newcommand{\calK}{\ensuremath{\mathcal{K}}}
\newcommand{\calL}{\ensuremath{\mathcal{L}}}

\newcommand{\calO}{\ensuremath{\mathcal{O}}}
\newcommand{\calP}{\ensuremath{\mathcal{P}}}

\newcommand{\calS}{\ensuremath{\mathcal{S}}}
\newcommand{\calT}{\ensuremath{\mathcal{T}}}

\newcommand{\conv}{\mathop{\bf conv}}

\usepackage[utf8]{inputenc}

\DeclareFixedFont{\ttb}{T1}{txtt}{bx}{n}{10} 
\DeclareFixedFont{\ttm}{T1}{txtt}{m}{n}{10}  

\usepackage{color}
\definecolor{deepblue}{rgb}{0,0,0.5}
\definecolor{deepred}{rgb}{0.6,0,0}
\definecolor{deepgreen}{rgb}{0,0.5,0}

\usepackage{listings}

\definecolor{codegreen}{rgb}{0,0.6,0}
\definecolor{codegray}{rgb}{0.5,0.5,0.5}
\definecolor{codepurple}{rgb}{0.58,0,0.82}
\definecolor{backcolour}{rgb}{0.95,0.95,0.92}

\lstdefinestyle{mystyle}{
	backgroundcolor=\color{backcolour},   commentstyle=\color{codegreen},
	keywordstyle=\color{magenta},
	numberstyle=\tiny\color{codegray},
	stringstyle=\color{codepurple},
	basicstyle=\ttfamily\footnotesize,
	breakatwhitespace=false,         
	breaklines=true,                 
	captionpos=b,                    
	keepspaces=true,                 
	numbers=left,                    
	numbersep=5pt,                  
	showspaces=false,                
	showstringspaces=false,
	showtabs=false,                  
	tabsize=2
}
\usepackage[abbreviations]{glossaries-extra}
\newabbreviation{gcs}{GCSTrajOpt}{\emph{Graph of Convex Sets}}

\newcommand{\bez}{Bézier\xspace}

\newtheorem{lemma}{Lemma}
\newtheorem{definition}{Definition}
\newtheorem{property}{Property}

\usepackage{cleveref}
\crefname{section}{sec.}{secs.}
\Crefname{section}{Sec.}{Secs.}
\crefname{figure}{fig.}{figs.}
\Crefname{figure}{Fig.}{Figs.}
\crefname{algorithm}{alg.}{algs.}
\Crefname{algorithm}{Alg.}{Algs.}
\crefname{table}{tab.}{tabs.}
\Crefname{table}{Tab.}{Tabs.}
\crefname{subappendix}{Appendix}{Appendices}
\Crefname{subappendix}{Appendix}{Appendices}

\crefname{algorithm}{alg.}{algs.}
\Crefname{algorithm}{Alg.}{Algs.}

\Crefname{lemma}{Lemma}{Lemmas}
\crefname{lemma}{Lemma}{Lemmas}
\Crefname{definition}{Definition}{Definitions}
\crefname{definition}{Definition}{Definitions}
\Crefname{property}{Property}{Properties}
\crefname{property}{Property}{Properties}

%% file: sections/introduction.tex
\section{Introduction}\label{sec:intro}

\IEEEPARstart{E}{fficiently} moving a robot between a starting position and a goal position, while not colliding with the environment, is a fundamental challenge in robotics. In high-throughput industrial settings such as manufacturing and warehouse logistics, robots must execute tasks reliably under tight time constraints, making both the quality and predictability of motion plans critical. Despite decades of research, the motion planning pipelines commonly deployed in such settings are often still simple heuristics, such as waypoint planners or precomputed motion primitives, that sacrifice performance in favor of reliability and predictable runtimes~\cite{ correll2018analysis, hernandez2016team, marcucci2025biconvex}.

We can observe a similar inefficiency in modern open-ended robotic manipulation. Recent advances in imitation learning have enabled robots to accomplish tasks that seemed out of reach just a few years ago~\cite{chi2025diffusion, intelligence2025pi, barreiros2025careful}. Yet the resulting policies tend to move slowly and give obstacles a wide berth. This is not out of necessity, but an artifact of how training data is collected.
Teleoperators prioritize task success and steer clear of obstacles, trading safety for efficiency. 

In both cases a drop-in planner that keeps this reliability while producing high-quality trajectories could improve throughput by either directly controlling the robots or polishing demonstrations post hoc.
\begin{figure}[t]
    \centering
    \includegraphics[width=\columnwidth]{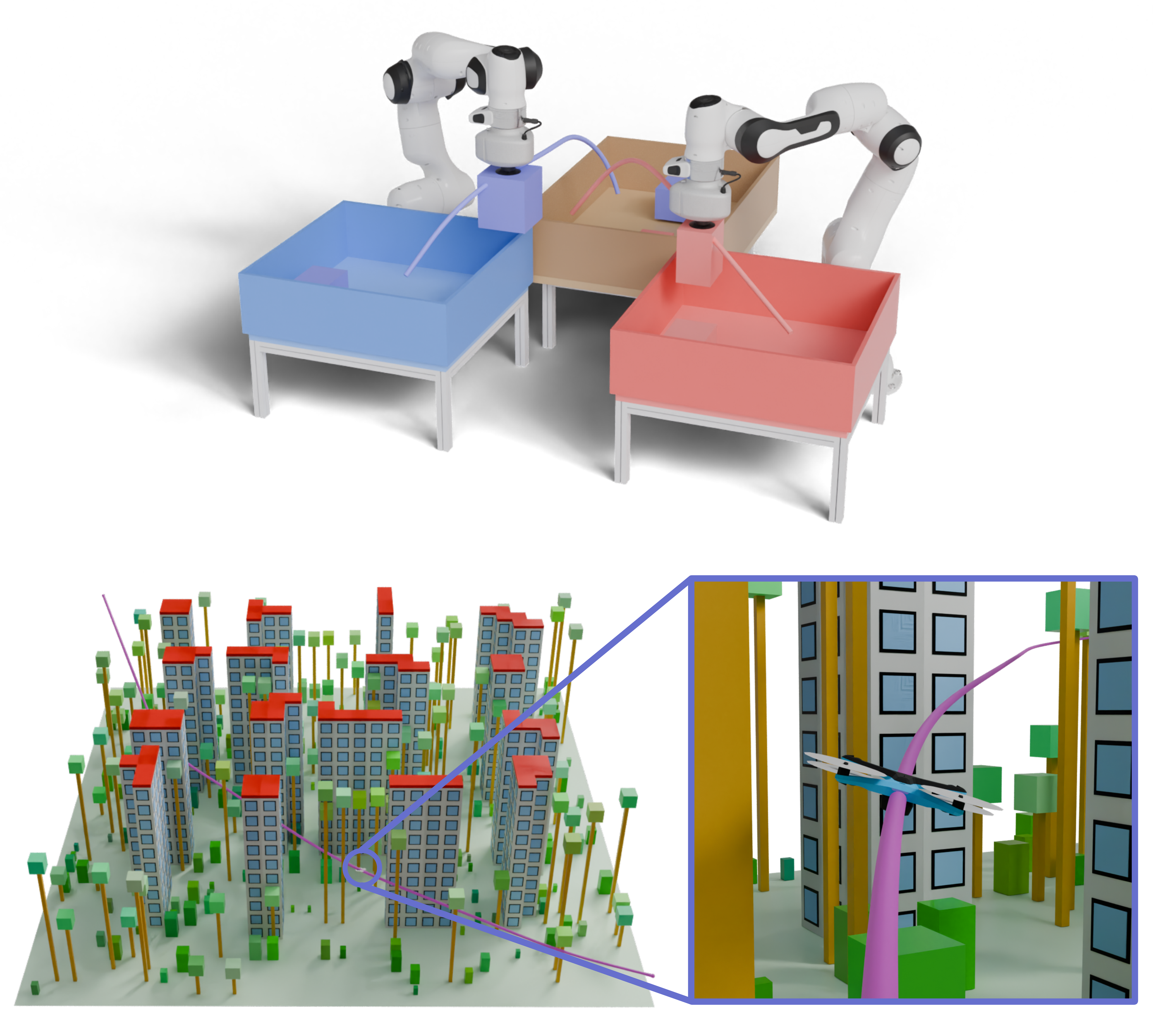}
    \caption{Our biconvex minimum-time planner computes smooth trajectories around convex obstacles. \textit{(top)} Two Franka arms cooperatively unload packages from a central bin to their respective offload bins using the planner. \textit{(bottom)} We employ the planner to compute a minimum-time trajectory with continuous snap and constrained velocity, acceleration, and jerk to navigate a drone through a village.}
    \label{fig:title}
\end{figure}

This paper proposes a biconvex minimum-time planner that is guaranteed to converge, is fast, produces high-quality trajectories, and supports derivative constraints to any degree, e.g., jerk, snap, and can be applied to tasks like bin unloading or drone navigation as shown in~\Cref{fig:title}.

\subsection{Established motion planning approaches}
For general applications, the established motion planning approaches can be broadly grouped into two categories: sampling-based planners and trajectory optimization.
Sampling-based planners such as RRTs and PRMs~\cite{kuffner2000rrt, kavraki1996probabilistic} are simple to implement, enjoy asymptotic completeness guarantees~\cite[\S 5]{lavalle2006planning}, and can be significantly accelerated through parallelization~\cite{pan2012gpu, thomason2024motions,huang2025prrtc}. However, they tend to produce jagged, circuitous paths and do not natively encode smoothness, velocity, or acceleration constraints. Extensions make these paths inherently smooth~\cite{webb2013kinodynamic} or improve solution quality with more samples, achieving asymptotic~\cite{karaman2011sampling} or almost-sure asymptotic optimality~\cite{wilson2025aorrtc}. Often practitioners also rely on post-processing of the found plans to improve their quality via shortcutting~\cite{geraerts2007creating, hauser2010fast} or B-spline smoothing~\cite{pan2012collision}. Additionally, optimal retiming~\cite{pham2018new} can enforce derivative constraints. Unfortunately, the listed steps are heuristic, can fail, or produce suboptimal results and generally add complexity to the motion planning stack which often makes it brittle.

Alternatively one can try to solve the entire problem in one shot by using local methods to directly optimize a trajectory~\cite{schulman2014motion, ratliff2009chomp}. There exist also nonlinear trajectory optimization methods that are tailored to separating convex obstacles from trajectories such as~\cite{zhang2020optimization, tordesillas2021mader}. Such optimization methods offer a natural way of encoding general costs and constraints, scale to high-dimensional spaces, and can be parallelized to check many initial guesses on modern GPUs~\cite{sundaralingam2023curobo}.
The main challenge in using these methods is dealing with nonconvex constraints such as collision-avoidance constraints, which can cause local solvers to get stuck or fail, produce inconsistent runtimes, and often require careful warm-starting. These issues make them difficult to deploy reliably in practice.

\subsection{Decomposition-based motion planners}\label{ssec:dbmp_overview}
A recent line of work on DBMPs~\cite{deits2015efficient, marcucci2024shortest, marcucci2023motion, marcucci2024fast, werner2025superfast, marcucci2025biconvex, chen2016online, liu2017planning, wu2024optimal, natarajan2024implicit,chia2024gcs,morozov2025mixed, yang2025new} attempts to combine the strengths of both approaches: reliable performance and richer problem descriptions in terms of costs and constraints. These approaches show promise in demanding warehouse settings~\cite{marcucci2025biconvex}. The key idea is to (approximately) decompose the collision-free space into a union of convex sets and restrict the trajectory to lie within it. The benefit is that challenging nonconvex collision-avoidance constraints are replaced with simple convex constraints per-set and the nonconvexity is pushed into selecting which sets the trajectory visits.


Further, technical challenges arise when attempting to optimize the trajectory duration while enforcing derivative constraints beyond the first derivative, such as requiring a trajectory to have bounded jerk or obey acceleration limits. These constraints are often crucial in practice. We briefly explain the nature of these technical challenges in~\Cref{app:dbmp_nonconvexity}. To deal with these challenges, the authors fix the trajectory duration~\cite{deits2015efficient, chen2016online, liu2017planning, wu2024optimal}, devise a sequential optimization approach~\cite{marcucci2024fast,marcucci2025biconvex}, support only a subset of these constraints~\cite{marcucci2024graphs,natarajan2024implicit,chia2024gcs}, or discretize the trajectory duration~\cite{morozov2025mixed}. Alternatively, the authors in~\cite{yang2025new} represent the trajectory with a discrete set of points and a variable time scaling, and devise a semidefinite relaxation and rounding scheme for the resulting nonconvex trajectory optimization problem. However, this approach provides no guarantees on the tightness of the relaxation or the robustness of the rounding procedure. None of the referenced DBMPs can enforce both higher-order derivative constraints and continuity to arbitrary order in a convex fashion. 

Beyond these technical challenges, DBMPs additionally require a description of the free space as a union of convex sets in order to be deployed. These descriptions can be computed with polytope inflation algorithms such as~\cite{deits2015computing,liu2017planning, werner2024approximating,werner2024faster, dai2024certified}. A key challenge is that this step tends to be computationally costly and needs to be performed every time the environment changes. Recently, it was significantly accelerated with GPU parallelization~\cite{werner2025superfast}, but the cost still remains substantial. 

\subsection{Proposed biconvex minimum-time planner}
In this paper, we propose a biconvex minimum-time planner (BMTP) that attempts to tackle all of the above listed challenges: it supports derivative constraints beyond first- and second-order, it supports continuity constraints on the trajectory to an arbitrary finite degree, and does not require a costly approximate convex decomposition of the free space up front. In our approach, we apply a similar change of variables as in~\cite{leomanni2022time} to convexify the minimum-time objective and derivative constraints up to an arbitrary degree. We then formulate the collision-avoidance constraints as an intersection of time-varying halfspaces, yielding a time-varying, safe polytope. Concretely, for every time between the start and the end of the trajectory, the evaluated halfspaces separate an obstacle from the corresponding point on the trajectory. This allows trajectories to curve around obstacles without the explicit need for multiple segments as in prior DBMP works. Finally, BMTP is an effective anytime procedure that refines an initial collision-free trajectory by alternating between computing a valid sequence of convex sets around it and updating the trajectory until convergence. Notably, our approach can be initialized with a simple polygonal curve, which can be hand-crafted, or generated with any sampling-based motion planner. We also provide an open-source implementation of our planner (\Cref{sec:software}).

\subsection{Paper structure}
The remainder of this paper is organized as follows. We state the nonconvex minimum-time planning problem in~\Cref{sec:problem_statment}.~\Cref{sec:convexity_background} reviews the convex-analysis tools we build our approach on. We then set up our BMTP in three steps. In~\Cref{sec:minimum_time} we convexify the minimum-time objective and the derivative constraints. In~\Cref{sec:polar_reformulation} we recast collision avoidance through time-varying separating planes which yields a convex problem up to bilinearities in the separation constraints. In~\Cref{sec:sequential_convex_programming} we describe our BMTP for optimizing solutions to the bilinear formulation. In~\Cref{sec:discretization_convex} we describe how to make the problems finite-dimensional and solve them numerically using \bez curves.~\Cref{sec:experimental_eval} evaluates the method on drone navigation and dual-arm bin unloading in simulation and on hardware.~\Cref{sec:discussion} discusses our findings and limitations.~\Cref{sec:software} describes our open-source implementation, and we draw a conclusion in~\Cref{sec:conclusion}.

%% file: sections/problem_statement.tex
\section{Problem Statement}\label{sec:problem_statment}

We consider the problem of designing a trajectory that connects two points in minimum time, while avoiding a set of obstacles and satisfying given derivative constraints.
We represent the trajectory as the function
$$
q:[0,T] \rightarrow \mathbb R^n,
$$
where $T \in \mathbb R_{>0}$ is the trajectory duration and $n \in \mathbb N$ is the space dimension.
The fixed initial point is denoted as $q_0 \in \mathbb R^n$ and the final one as $q_T \in \mathbb R^n$.
The obstacles to be avoided are represented by the sets $\calO_1, \ldots, \calO_K \subset \mathbb R^n$.
We assume that these sets are convex and open, so that it is feasible to move along their boundary. Observe that nonconvex obstacles can often be exactly decomposed or efficiently approximated as unions of convex obstacles.

We ask that the trajectory $q$ be continuously differentiable $I \in \mathbb N$ times.
For $i=1,\ldots, I$, we let
$$
q^{(i)} :[0,T] \rightarrow \mathbb R^n
$$
be the $i$th time derivative of the trajectory $q$.
Each trajectory derivative $q^{(i)}(t)$ must be contained in a corresponding convex set $\mathcal C_i$ at all times $t \in [0, T]$.
These constraint sets are compact, convex, and contain the origin in their interior.
The initial and final values of the trajectory derivatives must be zero.

The motion-planning problem just described can be formulated as the following optimization problem:
\begin{subequations}
\label{eq:statement}
\begin{align}
\text{minimize} \quad & T \\
\text{subject to} \quad
& T > 0, \\
\label{eq:statement_boundary_configuration}
& q(0) = q_0, \ q(T) = q_T, \\
\label{eq:statement_boundary_derivative}
& q^{(i)}(0) = q^{(i)}(T) = 0, \quad i=1,\ldots, I, \\
\label{eq:statement_derivative}
& q^{(i)}(t) \in \mathcal C_i, \quad i=1,\ldots, I, \ t \in [0,T], \\
\label{eq:statement_obstacle_avoidance}
& q(t) \notin \calO_k, \quad k=1,\ldots, K, \ t \in [0,T].
\end{align}
\end{subequations}
The variables are the trajectory $q$ (which is infinite dimensional) and the time duration $T$.
The existence and continuity of the trajectory derivatives are implicit constraints in this problem.

%% file: sections/convexity_background.tex
\section{Background on Employed Convex Sets}\label{sec:convexity_background}

In this section, we introduce two families of convex sets that will play a central role in the remainder of the paper.
We use the notation $\lambda \mathcal S = \{\lambda x : x \in \mathcal S\}$ to denote the product of a scalar $\lambda \in \mathbb R$ and a set $\mathcal S \subseteq \mathbb R^n$.

\begin{lemma}
\label{lem:concave_perspective}
Let $\mathcal C \subseteq \mathbb R^n$ be a convex set that contains the origin.
Let $f : \mathbb R \rightarrow \mathbb R$ be a concave function.
The following set is convex:
$$
\mathcal S = \{(x, y): f(y) > 0, \ x \in f(y) \mathcal C\}.
$$
\end{lemma}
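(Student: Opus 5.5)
Our plan is to verify convexity directly from the definition. We take two points $(x_1,y_1),(x_2,y_2)\in\mathcal S$ and $\theta\in[0,1]$, and set $(x,y)=\theta(x_1,y_1)+(1-\theta)(x_2,y_2)$. We must show $f(y)>0$ and $x\in f(y)\mathcal C$.

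The first condition follows from concavity: $f(y)\ge \theta f(y_1)+(1-\theta)f(y_2)>0$.

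For the second, we use that $f(y_j)>0$ to write $x_j=f(y_j)c_j$ with $c_j\in\mathcal C$. Then
$$
x=\theta f(y_1)c_1+(1-\theta)f(y_2)c_2 .
$$
Set $\mu=\theta f(y_1)+(1-\theta)f(y_2)>0$. We obtain $x=\mu c$, where
$$
c=\frac{\theta f(y_1)}{\mu}c_1+\frac{(1-\theta)f(y_2)}{\mu}c_2
$$
is a convex combination of $c_1$ and $c_2$. Hence $c\in\mathcal C$ by convexity, and so $x\in\mu\mathcal C$.

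The remaining step, which we expect to be the only delicate point, is passing from $\mu\mathcal C$ to $f(y)\mathcal C$ when $f(y)\ge\mu$. Because $\mathcal C$ is convex and contains the origin, the scaled sets are nested: for $0<\mu\le\nu$,
$$
\mu\mathcal C=\nu\Big(\tfrac{\mu}{\nu}\mathcal C\Big)\subseteq\nu\mathcal C .
$$
This inclusion holds since $\tfrac{\mu}{\nu}c=\tfrac{\mu}{\nu}c+(1-\tfrac{\mu}{\nu})0\in\mathcal C$. Taking $\nu=f(y)$ gives $x\in f(y)\mathcal C$, so $(x,y)\in\mathcal S$, which completes the argument.

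Two remarks on the argument. It is exactly the standard perspective construction, with the linear scaling replaced by a concave one, and the monotonicity of $\lambda\mapsto\lambda\mathcal C$ is what absorbs the concavity gap between $\mu$ and $f(y)$. Also, $\mathcal C$ need not be closed, and $f$ needs no regularity beyond concavity.
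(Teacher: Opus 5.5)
Your proof is correct and is essentially the paper's argument written out directly. The paper lifts to $\mathcal T=\{(x,y,z): f(y)\ge z>0,\ x\in z\mathcal C\}$, cites the perspective and hypograph facts for its convexity, and then uses your nesting step ($\theta\bar x+(1-\theta)0\in\mathcal C$ with $\theta=z/f(y)$) to show that $\mathcal S$ is the projection of $\mathcal T$. Your intermediate scale $\mu$ is exactly the $z$-coordinate of the convex combination of the lifted points $(x_j,y_j,f(y_j))$, so your version is a self-contained one that proves the cited convexity of $\mathcal T$ inline rather than by reference.
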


\begin{proof}
The set $\mathcal T = \{(x, y, z): f(y) \geq z > 0, x \in z \mathcal C\}$ can be shown to be convex as in~\cite[\S2.3.3]{boyd2004convex}.
The set $\mathcal S$ is convex since it is the orthogonal projection of $\mathcal T$ onto the space of the variables $x$ and $y$.
To see this, let $(x, y, z)$ be any point in $\mathcal T$.
Let $\bar x$ be the point in $\mathcal C$ such that $x = z \bar x$ and define $\theta = z / f(y) \in (0,1]$.
We have $x = z \bar x = f(y) \theta \bar x = f(y) (\theta \bar x + (1- \theta) 0)$.
Since both $\bar x$ and $0$ are in $\mathcal C$, so is their convex combination.
Hence $x \in f(y) \mathcal C$.
Together with $f(y) \geq z > 0$, this shows that $(x, y) \in \mathcal S$. Conversely, any $(x,y)\in\calS$ satisfies $(x,y,f(y))\in\calT$, so $\calS$ is the projection of $\calT$.
\end{proof}

To formulate collision avoidance constraints for an obstacle $\calO\subseteq\mathbb{R}^n$, it is useful to characterize the set of valid linear inequalities. We say the linear inequality $a^\top x + b\geq 0$ is \emph{valid} for $\calO$ if it holds for all points $x\in\calO$.
\begin{definition}\label{def:polarcone}
    The polar $\calO^\circ$ of a set $\calO\subseteq\mathbb{R}^n$ is the set of all vectors defining valid inequalities:
    \begin{gather}
        \calO^\circ:=\left\{(a,b):~a^\top x + b\geq 0 ~~\text{for all }x\in\calO\right\}.
    \end{gather}
\end{definition}
The polar $\calO^\circ$ is the intersection of one halfspace per point in $\calO$, and is therefore convex, even when $\calO$ is not. We have included a computational recipe for computing the polars of convex sets in conic standard form in~\Cref{app:conic_polar}.

%% file: sections/minimum_time.tex
\section{Joint Convexification of the Minimum-Time Objective and Derivative Constraints}\label{sec:minimum_time}

In this section, we formulate the minimum-time problem~\eqref{eq:statement} as an optimization problem whose nonconvexity is only due to the obstacle avoidance constraints~\eqref{eq:statement_obstacle_avoidance}.
The technique we use is closely related to the one from~\cite{leomanni2022time}.

We work with a normalized version $r:[0,1] \rightarrow \mathbb R^n$ of the function $q$ that has domain $[0,1]$ instead of domain $[0,T]$.

The trajectory $q$ is obtained from $r$ simply as
$$
q(t) = r(t/T)
$$
for all $t \in [0, T]$.
Similarly, the time derivatives of $q$ are computed as
$$
q^{(i)}(t) = \frac{1}{T^i} r^{(i)}(t/T)
$$
for all $t \in [0, T]$ and $i=1, \ldots, I$.

With this new trajectory parameterization, the boundary conditions~\eqref{eq:statement_boundary_configuration} and~\eqref{eq:statement_boundary_derivative} are almost unchanged:
\begin{subequations}
\begin{align}
& r(0) = q_0, \ r(1) = q_T, \\
& r^{(i)}(0) = r^{(i)}(1) = 0, \quad i=1,\ldots, I.
\end{align}
\end{subequations}
Note that in the second family of constraints we eliminated the terms $T^i$ since the right-hand side is zero.

The derivative constraints~\eqref{eq:statement_derivative} become
\begin{align}
& r^{(i)}(s) \in T^i \mathcal C_i, \quad i=1,\ldots, I, \ s \in [0,1],
\end{align}
where we multiplied both sides by $T^i$.
This family of constraints is nonconvex, but it can be convexified by working with the variable $T_I = T^I$ instead of $T$.
Then the constraints above become
\begin{align}
& r^{(i)}(s) \in T_I^{i/I} \mathcal C_i, \quad i=1,\ldots, I, \ s \in [0,1],
\end{align}
which are convex by~\Cref{lem:concave_perspective}.
Note that the function $T_I^{i/I}$ is concave and the set $\mathcal C_i$ contains the origin, for $i=1,\ldots, I$.

In terms of the variable $T_I$, the objective of our problem reads $T_I^{1/I}$.
This is not a convex function of $T_I$, but it is monotonically increasing in $T_I$.
Therefore, we can simply minimize $T_I$, without affecting the optimal solution of our problem.

Overall, we have the following optimization problem
\begin{subequations}\label{eqn:convex_min_time}
\begin{align}
\text{minimize} \quad & T_I \label{eqn:cvx_mintime_cost}\\
\text{subject to} \quad
& T_I > 0, \label{eqn:cvx_mintime_cons_start}\\
& r(0) = q_0, \ r(1) = q_T, \label{eqn:cvx_mintime_endpoint_cons}\\
& r^{(i)}(0) = r^{(i)}(1) = 0, \quad i=1,\ldots, I, \label{eqn:cvx_mintime_endpoint_cons2}\\
& r^{(i)}(s) \in T_I^{i/I} \mathcal C_i, \quad i=1,\ldots, I, \ s \in [0,1], \label{eqn:cvx_mintime_cons_end}\\
\label{eqn:col_avoidance}
& r(s) \notin \calO_k, \quad k=1,\ldots, K, \ s \in [0,1].
\end{align}
\end{subequations}
The variables are the function $r$ and the scalar $T_I$.
The objective function is linear.
All the constraints are convex, except for the collision avoidance.

%% file: sections/collision_avoidance.tex
\section{Polar Reformulation of Collision-Avoidance Constraints}\label{sec:polar_reformulation}
In this section, we perform one last reformulation before introducing our BMTP for finding solutions to the minimum-time problem~\eqref{eqn:convex_min_time}.

We reformulate the collision avoidance constraints~\eqref{eqn:col_avoidance} as a search for separating planes between the trajectory and each obstacle $\calO_k$. More precisely, for each obstacle $\calO_k$, we search for valid planes $(a_k, b_k): [0,1]\rightarrow \calO_k^\circ$ that separate the path $r$ from each obstacle $\calO_k$ for all $k=1,\ldots, K$ and $s \in [0,1]$. This turns problem~\eqref{eqn:convex_min_time} into the equivalent optimization problem:
\begin{subequations}\label{eqn:biconvex_problem}
\begin{align}
\text{minimize}\quad&\eqref{eqn:cvx_mintime_cost}\label{eqn:biconvex_problem_cost}\\
\text{subject to}\quad&\eqref{eqn:cvx_mintime_cons_start}-\eqref{eqn:cvx_mintime_cons_end},\\
& a_k(s)^\top r(s) + b_k(s) < 0 , \ s \in [0,1],\label{eqn:biconvex_problem_avoidance}\\
& (a_k(s), b_k(s)) \in \calO_k^\circ, \ s \in [0,1],\label{eqn:biconvex_polar}
\end{align}
\end{subequations}
where we add one set of plane constraints~\eqref{eqn:biconvex_problem_avoidance}, \eqref{eqn:biconvex_polar} for each $k=1,\ldots,K$.
We use a strict inequality in~\eqref{eqn:biconvex_problem_avoidance} to prevent selecting $(a_k(s), b_k(s))=0$ which would not enforce separation. The constraint~\eqref{eqn:biconvex_polar} is convex, therefore the only remaining nonconvexities are the bilinearities in $a_k$ and $r$. 

We now observe that the problem is convex if we either fix the path $r$ or the planes $(a_k, b_k)$. To find good solutions to~\eqref{eqn:convex_min_time}, we can now apply standard techniques like alternate convex search or block coordinate descent as described in~\cite{gorski2007biconvex, shen2017disciplined} and references therein. In our case, the analogous idea is alternating between searching for valid planes given a fixed trajectory, and then subsequently refining the trajectory while fixing the planes.

The idea of optimizing separating planes jointly with the trajectory also appears in~\cite{zhang2020optimization, tordesillas2021mader}. These works enforce separation only at discrete time steps~\cite{zhang2020optimization} or with a single constant plane per trajectory segment~\cite{tordesillas2021mader}, and solve the joint problem with general nonconvex solvers. In contrast, our planes are time-varying functions that separate the trajectory from each obstacle at every $s \in [0,1]$.

%% file: sections/sequential_convex_programming.tex
\section{Biconvex Minimum-Time Planner}\label{sec:sequential_convex_programming}
\begin{figure}
    \centering
    \includegraphics[width=\linewidth, trim= {0cm, 0.cm, 0cm, 0cm}, clip]{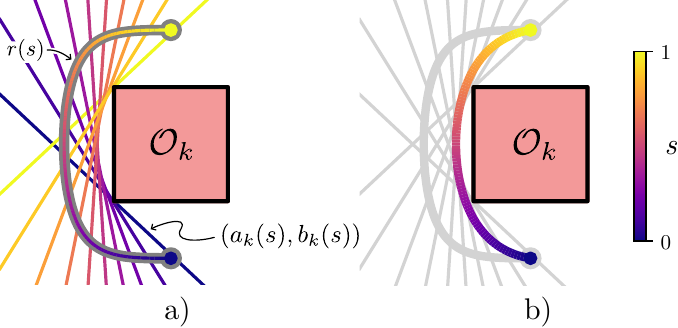}
    \caption{A single alternation between the plane update (left) and the trajectory update (right). During the plane update, we compute separating planes that separate each point on the current trajectory from the obstacle while maximizing the margin, leaving room for the trajectory to improve. During the trajectory update, we re-optimize the trajectory while keeping the planes fixed.}
    \label{fig:single_alternation}
\end{figure}

\begin{figure*}[t]
    \centering
    \includegraphics[width=\linewidth]{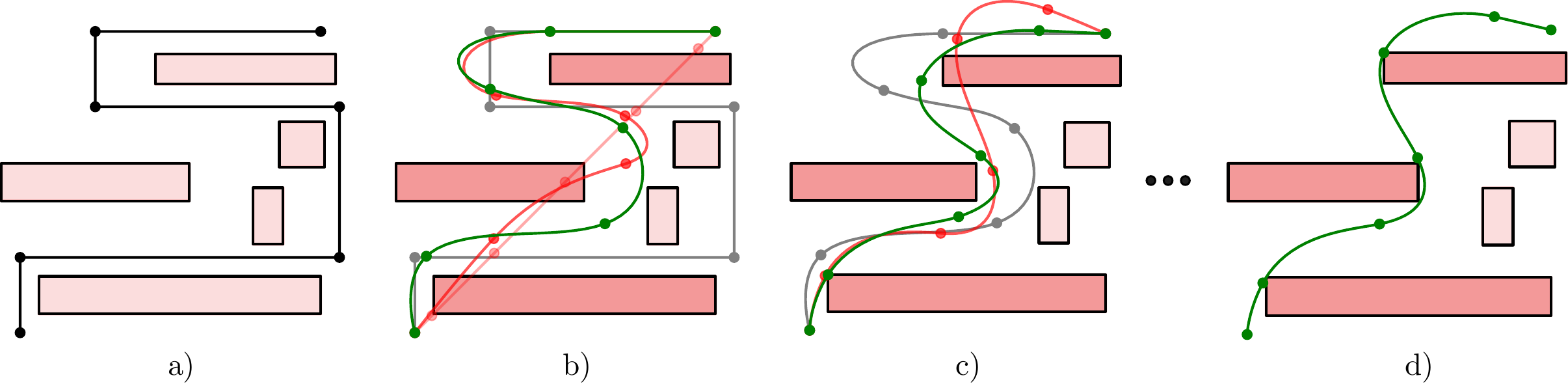}
    \caption{Our BMTP applied to a 2D environment with four obstacles. The last feasible trajectory is shown in gray, infeasible intermediate iterates are shown in red, and the next feasible iterate is shown in green. Tagged obstacles are shaded more strongly. In frame a), we initialize with a collision-free trajectory $r_\text{init}$. In frame b), the first iteration proceeds through several infeasible iterates: the initial red trajectory cuts directly toward the goal while intersecting three obstacles, followed by a curved red trajectory that collides with the middle obstacle on a different segment, and finally the first feasible iterate that jumps around the two smaller obstacles on the right. In frame c), we update both the feasible trajectory and its separating planes, giving the trajectory more room to improve. One additional collision occurs before finding the next feasible iterate. In frame d), the algorithm has converged to the final optimized trajectory.}
    \label{fig:sco_overview}
\end{figure*}
We are now ready to introduce our biconvex minimum-time planner (BMTP), outlined in \Cref{alg:sco_approach}.

First, we observe that using the polar reformulation of the collision avoidance constraints, problem~\eqref{eqn:biconvex_problem} is now biconvex: If we either fix the separating planes $(a_k,b_k)$ or the path $r$, the optimization becomes convex. We can therefore employ a basic biconvex procedure as in~\cite{gorski2007biconvex, shen2017disciplined} for solving the trajectory optimization. First, we initialize our search with a feasible path $r$. We then alternate between fixing the path and updating the separating planes, which we call the \emph{plane update} and cover in~\Cref{ssec:plane_update}, and fixing the separating planes and updating the path and its traversal time, which we call the \emph{trajectory update} and cover in~\Cref{ssec:trajectory_update}. A single alternation is illustrated in~\Cref{fig:single_alternation}. 

Our full biconvex minimum-time planner adds a mechanism akin to~\cite[\S3.3]{lipp2016variations} on top of these basic alternations that allows the trajectory to escape some local minima and jump around obstacles. The full procedure is outlined in~\Cref{alg:sco_approach} and detailed in~\Cref{ssec:biconvex_highlevel_prod}. The idea is that there may be many obstacles, but only a few of the collision-avoidance constraints will be active at the converged solution. Therefore, it can be beneficial to only add and update planes for obstacles that the trajectory collides with during the alternations. In the remainder of the paper we will refer to these obstacles as ``tagged obstacles''.

\subsection{High-level procedure}\label{ssec:biconvex_highlevel_prod}
We initialize the procedure with a feasible, collision-free normalized trajectory $r = r_\text{init}$, which we subdivide into $M$ equal duration pieces, an empty collection of separating planes $\calP$, as well as an empty index set $\calI$ that will track which trajectory piece has collided with which obstacle. Then we proceed to an inner loop where we repeatedly solve the trajectory update, producing a candidate trajectory $c:[0,1] \rightarrow \mathbb R^n$ for the current planes and check $c$ for collisions. If the $m$th segment of $c$ collides with an obstacle $k$, we compute the plane update for the newly tagged obstacle $k$ around the corresponding $m$th segment of the last feasible trajectory $r$. If no collisions are found, we replace the last feasible trajectory with the new candidate $c$ and update all activated planes around the new feasible trajectory. We found that dividing $r$ into $M$ pieces and enforcing the collision avoidance constraint on an entire segment if a collision is found provides an effective middle ground between enforcing the constraint only at the $s$ where a collision was found and enforcing it on the entire trajectory.

Such an initialization can be obtained from a collision-free polygonal path by solving~\eqref{eqn:convex_min_time} between each pair of consecutive waypoints, dropping the collision-avoidance constraints~\eqref{eqn:col_avoidance} and constraining the path to the line segment connecting the waypoints. Since all trajectory derivatives vanish at the segment endpoints, the timed segments concatenate into a feasible trajectory that is continuously differentiable $I$ times.

\Cref{fig:sco_overview} provides an overview of this procedure. In frame a) we see the initial collision-free trajectory $r_\text{init}$ with $M=6$ segments. We then see the first iteration in frame b). The initial red trajectory cuts directly from the start to the goal while intersecting the three highlighted obstacles, which is followed by the curved red trajectory that is now intersecting the middle obstacle with a different segment, and finally the first feasible iterate that manages to jump around the two smaller obstacles to the right. In frame c) we see the second iteration. We update both the feasible trajectory, now shown in gray, and its planes which gives the trajectory more room to improve. During this iteration there is one additional collision before finding an improvement. Finally, in frame d), we see the converged trajectory.

\begin{algorithm}
\SetAlgoLined
\caption{Biconvex Minimum-Time Planner}
\label{alg:sco_approach}
\SetKwInput{Input}{Input}
\SetKwInput{Output}{Output}
\SetKw{KWAlgorithm}{Algorithm:} 
\Input{
Feasible path $r_\text{init}$
}
\Output{
Optimized, collision-free trajectory $(r, T)$.
}

\KWAlgorithm{}\\
$\calP \gets \emptyset$, $\calI \gets \emptyset$, $r\gets r_\text{init}$\\
\While{not converged
}{
    \Repeat{ $\calI_c=\emptyset$}{
   $(c, T_c)\gets \textsc{TrajectoryUpdate}(\calP)$\\
   $\calI_c \gets\textsc{GetObstacleCollisions}(c)$\\
   $\calI\gets \calI\cup\calI_c$\\
   $\calP \gets \calP\cup\textsc{PlaneUpdate}(\calI_c,r)$\\
   }
   $(r,T)\gets (c, T_c)$\\

   $\calP\gets\textsc{PlaneUpdate}(\calI,r)$
}
\Return $(r,T)$
\end{algorithm}

\subsection{Plane update}\label{ssec:plane_update}
We first note that if we fix the path $r$ in~\eqref{eqn:biconvex_problem}, any set of feasible planes produces the same overall trajectory cost. Unfortunately, this program can, therefore, produce planes that lie far from obstacles and very close to the trajectory. While such planes are a perfectly valid solution to the program, they are not desirable because they leave the trajectory with little to no room for improvement in subsequent alternations. Instead, we use a modified heuristic cost for~\eqref{eqn:biconvex_problem} during the plane update that pushes the planes as far away from the trajectory as possible while ensuring that they still separate the trajectory from the obstacles. 

More precisely, given a fixed, collision-free trajectory $r$, the $k$th obstacle $\calO_k$, and set of all segment indices $\calI_k$ of candidate trajectories $c$, for which the $k$th segment collided with $\calO_k$, let 
\begin{align}
    \mathcal{S}_k := \bigcup_{m \in \mathcal{I}_k} \left[\frac{m}{M}, \frac{m+1}{M}\right].
\end{align}

The plane functions $(a^\star_k, b^\star_k): \calS_k\rightarrow \calO^\circ_k$ that maximize the margin to the trajectory $r$ are given by projecting $r$ onto the obstacle $\calO_k$ and choosing the supporting planes with normals that point from $r$ to its projection for all $s$ in $\calS_k$:
\begin{subequations}\label{eqn:continuous_max_margin_planes_solution}
\begin{align}
a^\star_k(s) &= \frac{\mathrm{proj}_{\calO_k}[r(s)] - r(s)}{\|\mathrm{proj}_{\calO_k}[r(s)] - r(s)\|_2}, \\
b^\star_k(s) &= -a^{\star}_k(s)^\top \mathrm{proj}_{\calO_k}[r(s)].
\end{align}
\end{subequations}



\subsection{Trajectory update}\label{ssec:trajectory_update}
For the trajectory update, we can simply re-optimize the normalized trajectory $r$, and its timing $T$, by fixing the plane functions and solving~\eqref{eqn:biconvex_problem} which is now convex. This yields the candidate normalized trajectory $c(s)$ and its duration $T_c$. 

\subsection{Convergence}\label{ssec:convergence}
In this section we prove that our BMTP is guaranteed to converge under the assumption that the plane update is feasible for all segment obstacle pairs that are not colliding. This assumption holds for any collision-free polygonal initial trajectory, because each segment-obstacle pair can be separated by a plane that is constant in time.

First, we observe that the objective $T$ never increases across outer iterations. This holds because the trajectory update is a convex optimization, and the previous feasible trajectory $r$ and its timing always remain in the feasible set of the subproblem. The produced feasible candidates $c$ can therefore never have higher cost. Additionally, the trajectory duration can be lower-bounded by $T=0$. Therefore, the sequence of trajectory durations of the feasible candidates must converge.  

To conclude our proof, we confirm that the inner iterations always terminate in a finite number of iterations. We can directly see that the inner loop is guaranteed to find a new feasible trajectory after at most $KM$ attempts, because once a pair $(k,m)$ is tagged, the plane update ensures that the pair can never collide in later iterations. Each inner iteration therefore tags at least one new pair from the pool of $KM$ obstacle-segment combinations or terminates with a new feasible candidate $c$.

These properties also make BMTP anytime. Every feasible iterate $(r, T)$ satisfies all constraints of the minimum-time problem~\eqref{eq:statement}, and its duration never exceeds that of any earlier feasible iterate. If BMTP is interrupted before convergence, it returns the last feasible iterate.

%% file: sections/discretization.tex
\section{Finite-Dimensional Formulation}\label{sec:discretization_convex}
In this section, we make the subproblems of BMTP finite-dimensional by leveraging \bez curves, which allows us to implement the approach on a computer. \bez curves, and more generally B-splines, are widely used in motion planning, from early work~\cite{flores2008real} to more recent approaches such as listed in~\Cref{ssec:dbmp_overview}.

We represent the normalized trajectory $r$ through a composite \bez curve with $M$ segments, each of duration $1/M$. All segments in the final trajectory therefore have the same duration $T/M$. This is in contrast to the DBMP formulations, where each segment carries its own duration, and allows us to formulate the continuity constraints between segments and their derivatives as linear equality constraints. We also represent the time-varying planes as multi-dimensional composite \bez curves. In particular, we split the domain of the functions $(a_k,b_k)$ into $M$ segments $(a_{m,k},b_{m,k})$, each of duration $1/M$. Since we only need to compute the planes for segments of the trajectory that have tagged obstacles, we assign an individual \bez curve to represent the corresponding segment $m$ of the plane function, while simply not adding constraints~\eqref{eqn:biconvex_problem_avoidance},~\eqref{eqn:biconvex_polar} for the other segments. We note that the plane functions need not be continuous across segments, so each segment of the plane functions can be optimized independently and in parallel.

\bez curves have a number of desirable properties which we leverage in our implementation. In the remainder of this section, we first briefly summarize some useful properties of \bez curves, then we apply these properties to the trajectory update and the plane update giving us the complete finite-dimensional formulation of the approach.

\subsection{Useful properties of \bez curves}
A degree-$D$ \bez curve is a vector-valued curve in $\mathbb R ^n$ given by
$$B(s) = \sum_{d = 0} ^D \beta_d(s)\pi_d, \quad \text{for } s\in[0,1],$$
where $\pi_d\in\mathbb R ^n$ are control points and $\beta_d$ are the degree-$D$ Bernstein polynomials

$$ \beta_d(s) = \begin{pmatrix}
    D\\d
\end{pmatrix}(1-s)^{D-d}s^d, \quad \text{for } d= 0,\dots, D.$$
For a fixed degree $D$, the Bernstein polynomials sum to one due to the binomial theorem, and are nonnegative for $s\in[0,1]$. Therefore, \bez curves can be thought of as forming a convex combination of the control points $\pi_d$ weighted by $\beta_d$. We now list some useful properties of \bez curves. For a more comprehensive overview see~\cite{farouki1988algorithms}.

\begin{property}[Convex Hull]\label{prop:cvx_hull}
    A degree-$D$ \bez curve $B(s)$ is contained in the convex hull of its control points $B(s)\subseteq\conv\{\pi_0,\dots, \pi_D\}, ~~\text{for all}~ s\in[0,1].$
\end{property}
\begin{property}[Derivative]\label{prop:derivative}
The derivative of a degree-$D$ \bez curve is a  \bez curve of degree $D-1$ with control points $D(\pi_{d+1} - \pi_d)$ for $d = 0, \dots, D-1$. 
\end{property}

\begin{property}[End Point]\label{prop:endpoint}
    A \bez curve of degree $D$ interpolates the first and the last control point: $B(0) = \pi_0$ and $B(1) = \pi_D$.
\end{property}
\begin{property}[Summation]\label{prop:summation}
    Given two \bez curves $B_1$ and $B_2$, both of degree $D$, their summation $B_{\text{sum}}(s) = B_1(s) + B_2(s)$ is a degree-$D$ \bez curve whose control points are given by the sum $\pi_{\text{sum},d} = \pi_{1,d} + \pi_{2,d}$. If the curves have different degrees, they can be summed by elevating the lower-degree curve to the same degree using the linear transformation on its control points given in~\cite[\S3.2]{farouki1988algorithms}.
\end{property}
\begin{property}[Multiplication]\label{prop:mult}
    Given scalar \bez curves $B_1$, of degree $D_1$, and $B_2$ of degree $D_2$, their multiplication $B_{\text{prod}}(s) = B_1(s)B_2(s)$ is a degree $D_1+D_2$ \bez curve. Its control points $\pi_{\text{prod}, d}$ are given by 
    $$ 
    \pi_{\text{prod}, d} = \sum_{l = \max(0, d-D_1)}^{\min(D_2,d)}\frac{\begin{pmatrix}
        D_2\\l
    \end{pmatrix}\begin{pmatrix}
        D_1\\d-l
    \end{pmatrix}}{\begin{pmatrix}
        D_1+D_2\\d
    \end{pmatrix}}\pi_{1,l}\pi_{2,d-l},
    $$
for $d = 0, \ldots, D_1+D_2$, as discussed in~\cite[\S4.2]{farouki1988algorithms}.
\end{property}

\subsection{Finite-dimensional trajectory update}\label{ssec:fd_trajectory_update}

We represent the normalized trajectory $r$ as a \bez curve. We enforce the constraints on the endpoints of the trajectory~\eqref{eqn:cvx_mintime_endpoint_cons}, and the endpoints of its first $I$ derivatives~\eqref{eqn:cvx_mintime_endpoint_cons2} using~\Cref{prop:derivative,prop:endpoint}. By~\cref{prop:cvx_hull}, we can enforce the derivative constraints~\eqref{eqn:cvx_mintime_cons_end} on the entire spline by only enforcing them on the control points of each segment of the derivative spline (\Cref{prop:derivative}). The implicit continuity constraints on the first $I$ derivatives are enforced via equality constraints on the control points of the derivative spline (\Cref{prop:derivative,prop:endpoint}).

There are multiple options for enforcing the collision-avoidance constraints~\eqref{eqn:biconvex_problem_avoidance}, given the precomputed planes $\calP$. We begin by evaluating each segment of the computed plane functions 
\begin{align}
    v_{m,k}(s) =  a_{m,k}(s)^\top r_m(s) + b_{m,k}(s).
\end{align}
The function $v_{m,k}$ is a polynomial that has a degree equal to the sum of the degrees of the trajectory and the plane curves. Here, $r_m$ denotes the $m$th segment of the spline representing the trajectory. The collision avoidance constraints now take the form
\begin{align}\label{eqn:scalar_curve_ineq}
    v_{m,k}(s)<0, \quad\text{for }s\in\left[\frac{m}{M},\frac{m+1}{M}\right].
\end{align}

We use a cheap, conservative approximation to enforce~\eqref{eqn:scalar_curve_ineq}. We use~\Cref{prop:summation,prop:mult} to convert $v_{m,k}$ to a scalar \bez curve in Bernstein form. This corresponds to taking a linear transformation of the trajectory decision variables. By~\cref{prop:cvx_hull}, we can now enforce a conservative approximation of~\eqref{eqn:biconvex_problem_avoidance} by constraining all the control points of $v_{m,k}$ to be negative.

We note that this condition can be enforced exactly in a convex fashion by using sums-of-squares optimization \cite[\S 3]{parrilo2003semidefinite} via the Markov–Lukács theorem~\cite{roh2006discrete}. This technique has been used in the past to certify the safety of trajectories~\cite{amice2024certifying}, but we found it to be too computationally expensive.

\subsection{Finite-dimensional plane update}
It is not practical to directly construct the separation certificates via the closed-form solution given in~\eqref{eqn:continuous_max_margin_planes_solution}. The projection of $r$ onto the obstacle may not result in a polynomial trajectory even when $r$ is, and simple heuristics such as only projecting the control points of $r$ may result in planes that are not contained in the polar set.

Instead, we use a more pragmatic approach: For each tagged obstacle and each segment that has collided with it, we split the plane update $\calO_k$ into independent subproblems.
If the $m$th segment has collided with the $k$th obstacle, we approximately solve the surrogate problem

\begin{subequations}
\label{eqn:max_margin_planes_discrete}
\begin{align}
\text{minimize} \quad & \max_{s\in\left[\frac{m}{M}, \frac{m+1}{M}\right]} v_{m,k}(s),\label{eqn:max_margin_planes_cost_discrete}\\
\text{subject to} \quad
&  (a_{m,k}(s), b_{m,k}(s)) \in \calO^\circ_k, \label{eqn:max_margin_planes_polar_cons_discrete}\\
& \| a_{m,k} \|_2 \leq 1,\label{eqn:max_margin_planes_norm_cons_discrete}
\end{align}
\end{subequations}
by conservatively minimizing the largest control point of $v_{m,k}$, which upper-bounds the maximum by~\cref{prop:cvx_hull}, and enforcing the constraints~\eqref{eqn:max_margin_planes_polar_cons_discrete},~\eqref{eqn:max_margin_planes_norm_cons_discrete} on the control points of the corresponding \bez curves.~\cref{prop:cvx_hull} ensures that these constraints hold on the plane function for all $s\in\left[\frac{m}{M}, \frac{m+1}{M}\right]$. 
We found this heuristic empirically crucial. It tries to maximize the margin between the planes and the trajectory where it is tightest. This ensures, when possible, that the subsequent trajectory update has some margin to improve. We further observe that as the number of segments tends towards infinity, this corresponds to a point-wise minimization of the margin, just as the infinite-dimensional formulation in~\eqref{eqn:continuous_max_margin_planes_solution}. See~\Cref{app:max_margin_plane_dual} for its connection to this discrete program.

\subsection{Practical details}\label{ssec:practical_details}
In practice, the obstacles are often closed sets. To ensure strict separation, we add a small positive margin $\Delta_v>0$ to~\eqref{eqn:scalar_curve_ineq}:
\begin{align}
    v_{m,k}(s)\leq-\Delta_v, \quad\text{for }s\in\left[\frac{m}{M},\frac{m+1}{M}\right].
\end{align}  
Similarly, to ensure that the planes are strictly separating when they have nonzero magnitude, we slightly modify the polar constraints to add a small positive margin $\Delta_p$ when the planes have nonzero magnitude. As explained in~\Cref{app:conic_polar}, for a convex set $\calO$ in conic standard form,
$$\calO = \{x\inR^n \mid \exists y\inR^m : Ex + Fy + g \in \calK\},$$ 
where $\calK$ is a closed convex cone, this can be accomplished by enforcing the polar constraints on the decision variables $(a,b)$ by adding the constraints
\begin{equation}
\calO^\circ = \left\{(a, b)\inR^{n+1} ~\middle|~
\begin{array}{l}
a = E^\top\nu,~ F^\top\nu = 0,\\
b \geq g^\top\nu+\Delta_p,~ \nu \in \calK^*
\end{array}
\right\},
\end{equation}
where $\calK^*$ denotes the dual cone of $\calK$. In practice, we find that values of $\Delta_p = \Delta_v = 10^{-6}$ tend to work well.

For further practical details, we refer the reader to our open-source implementation (\Cref{sec:software}).

%% file: sections/experiments.tex
\section{Experimental Evaluation}\label{sec:experimental_eval}

In this section we evaluate our approach on a series of experiments. First, we present an illustrative experiment where we plan a minimum-time trajectory for a drone to efficiently traverse a village in \Cref{ssec:village}. This experiment demonstrates how our approach can escape bad initialization and handle higher-order derivative and smoothness constraints. We then compare our approach to a state of the art DBMP on a bimanual bin unloading task in simulation in~\Cref{ssec:pallet_unloading} and validate the results on hardware in~\Cref{ssec:Hardware_validation}. For all experiments we employ an AMD Ryzen 9 7950X3D CPU.

\subsection{Drone flying through village}\label{ssec:village}

\begin{figure}[t]
    \centering
    \includegraphics[width=\linewidth, trim=11cm 1cm 5cm 1cm, clip, angle=0.1]{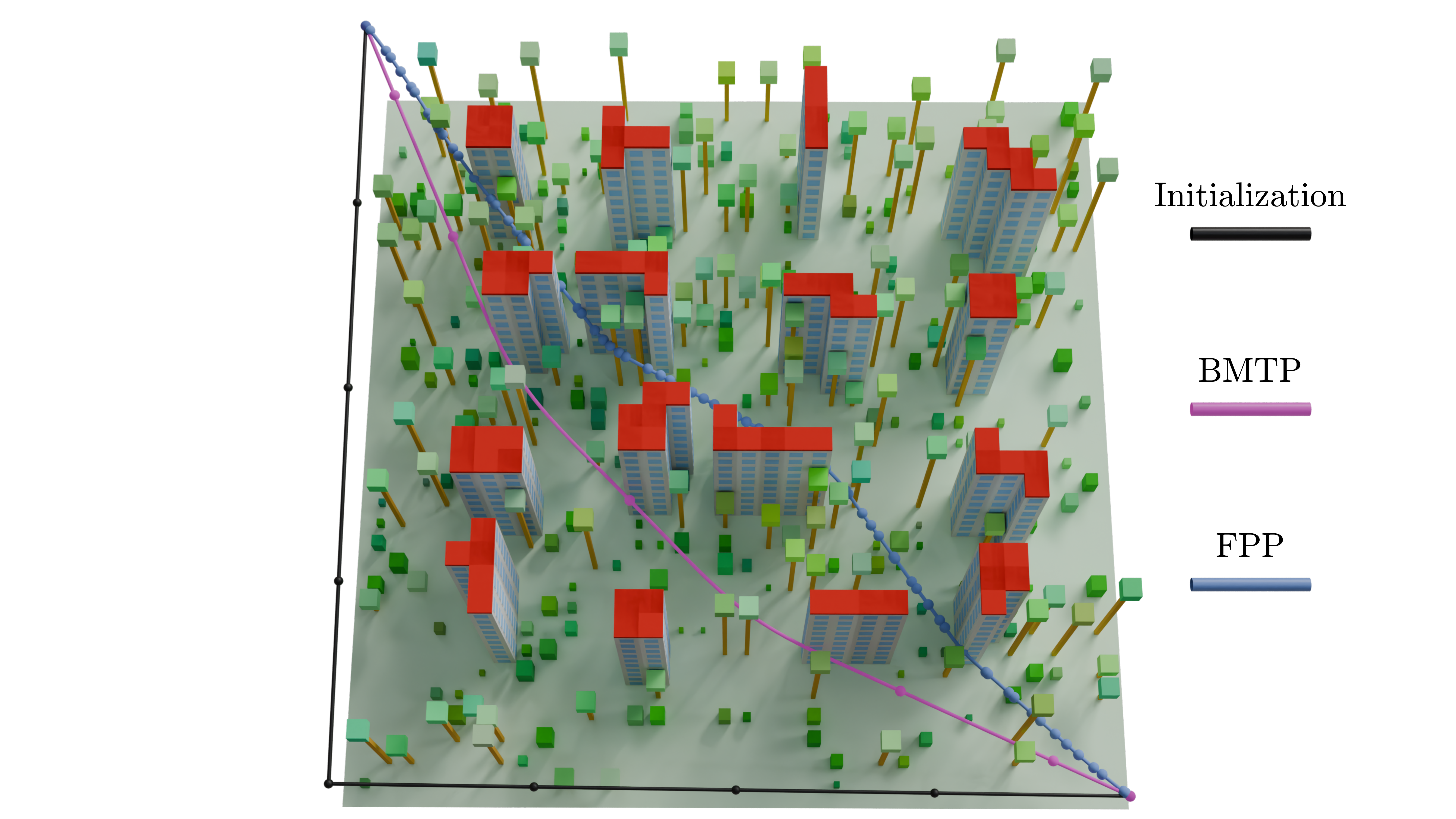}
    \caption{Quadrotor trajectory through the village environment from~\cite{marcucci2024fast}. The black line shows the initialization, which routes the drone around the village. The magenta curve shows the optimized BMTP trajectory, which cuts diagonally through the village despite the naive initialization.
    The blue trajectory shows the solution obtained by FPP~\cite{marcucci2024fast}, modified to support the minimum-time objective and using backtracking line search on the tangent step. The optimized trajectories are $C^4$-continuous with hard velocity, acceleration, jerk, and snap constraints.}
    \label{fig:village}
\end{figure}

In our first experiment, we plan a minimum-time trajectory for a quadrotor flying through the village environment from~\cite{marcucci2024fast} as shown in~\Cref{fig:village}. We require the quadrotor trajectory to be four-times continuously differentiable, as well as obey velocity, acceleration, jerk, and snap constraints, to guarantee dynamic feasibility through differential flatness~\cite{mellinger2011minimum}. This experiment highlights two key strengths of BMTP. First, it naturally handles high-order derivative and continuity constraints while simultaneously optimizing the trajectory duration. Second, it can jump around obstacles to escape poor initializations.

The village environment consists of 521 axis-aligned box obstacles (buildings, trees, and bushes) on a $34 \times 34$ meter grid. The start and goal are placed at opposite corners of the village, requiring the drone to traverse the environment diagonally. We initialize with a simple polygonal path with 8 segments that routes entirely around the village, avoiding all obstacles. We select trajectory degree 8, and enforce that the trajectory is four times continuously differentiable along with derivative constraints of the form
$$\calC_i = \left\{q^{(i)}(t) \mid \|q^{(i)}(t)\|_2\leq q^{(i)}_\text{max}\right\},$$
 for $i = 1,2,3,4 $, i.e., velocity, acceleration, jerk, and snap. We cap the velocity at $q^{(1)}_\text{max} = 5\,\mathrm{m/s}$, the acceleration at $q^{(2)}_\text{max} = 5\,\mathrm{m/s^2}$, jerk at $q^{(3)}_\text{max} = 25\,\mathrm{m/s^3}$, and snap at $q^{(4)}_\text{max} = 50\,\mathrm{m/s^4}$.

In order to run BMTP, we employ the collision-checking approach from~\Cref{app:collision_checking}, parallelized across all trajectory segments, and solve the resulting programs with Clarabel~\cite{goulart2026clarabel}. BMTP converges in 6 iterations of which 3 are feasible. The resulting collision-free trajectory is computed in $0.19$ seconds and has a duration of $11.83$ seconds, as reported in the first row of~\Cref{tab:village}.

For comparison, we also implement the minimum-time variation
of the Fast Path Planning (FPP) algorithm mentioned in~\cite[\S VIII]{marcucci2024fast}, but not available within the FPP software package. To improve convergence of this minimum-time variant of FPP, we modify the trust-region update rule as described in~\Cref{app:fpp}.
FPP matches the quality, but requires significantly higher computation time. In the first step of the algorithm, it selects a $70$-box corridor from the $2345$ safe boxes, and then solves the resulting $70$-segment smoothing problem. We observe this smoothing step to be sensitive to the trust-region schedule. The aggressive schedule from the original paper converges quickly, in $1.05$ seconds, but only to a high-cost $25.78$ second trajectory. We propose a small modification, which adds a backtracking line search that takes the largest projection-feasible step. With this modification the FPP approach results in an $11.68$ second trajectory in $10.28$ seconds. All FPP approaches require a $1.64$ second upfront cost paid for preprocessing all safe boxes. The times required in the polygonal phase of the algorithm are negligible in comparison to the preprocessing and the smooth phase. See~\Cref{app:fpp} for further details on our application of FPP.

\begin{table}[t]
\centering
\caption{Village Results}
\label{tab:village}
\resizebox{\columnwidth}{!}{%
\begin{tabular}{lccc}
\toprule
  & Segments & Time [s] & Duration [s] \\
\midrule
BMTP (ours)                 & 8  & $\mathbf{0.19}$ & 11.83 \\
FPP, original trust region  & 70 & 1.05  & 25.78 \\
FPP, modified trust region  & 70 & 10.28 & $\mathbf{11.68}$ \\
\bottomrule 
\end{tabular}}
\end{table}

The $8$-segment problem, where all the obstacles are ignored, can be solved as a convex program. For reference, this program gives us a lower bound on the BMTP trajectory duration of 11.35 seconds, showing that our method incurs only 0.48 additional seconds to avoid all the obstacles in the village.

Overall we find that our BMTP approach computes trajectories of equivalent quality to the method from~\cite{marcucci2024fast} while producing them over 50 times faster on this example. We do note that FPP scales well with the number of obstacles, but requires an upfront decomposition of the free space into boxes. 

\subsection{Bin unloading task with convex obstacles}\label{ssec:pallet_unloading}
\newcommand{\expfig}[1]{\includegraphics[width=0.24\textwidth, trim=9cm 3cm 9cm 0cm]{#1}}

\begin{figure*}[t]
    \centering
    \expfig{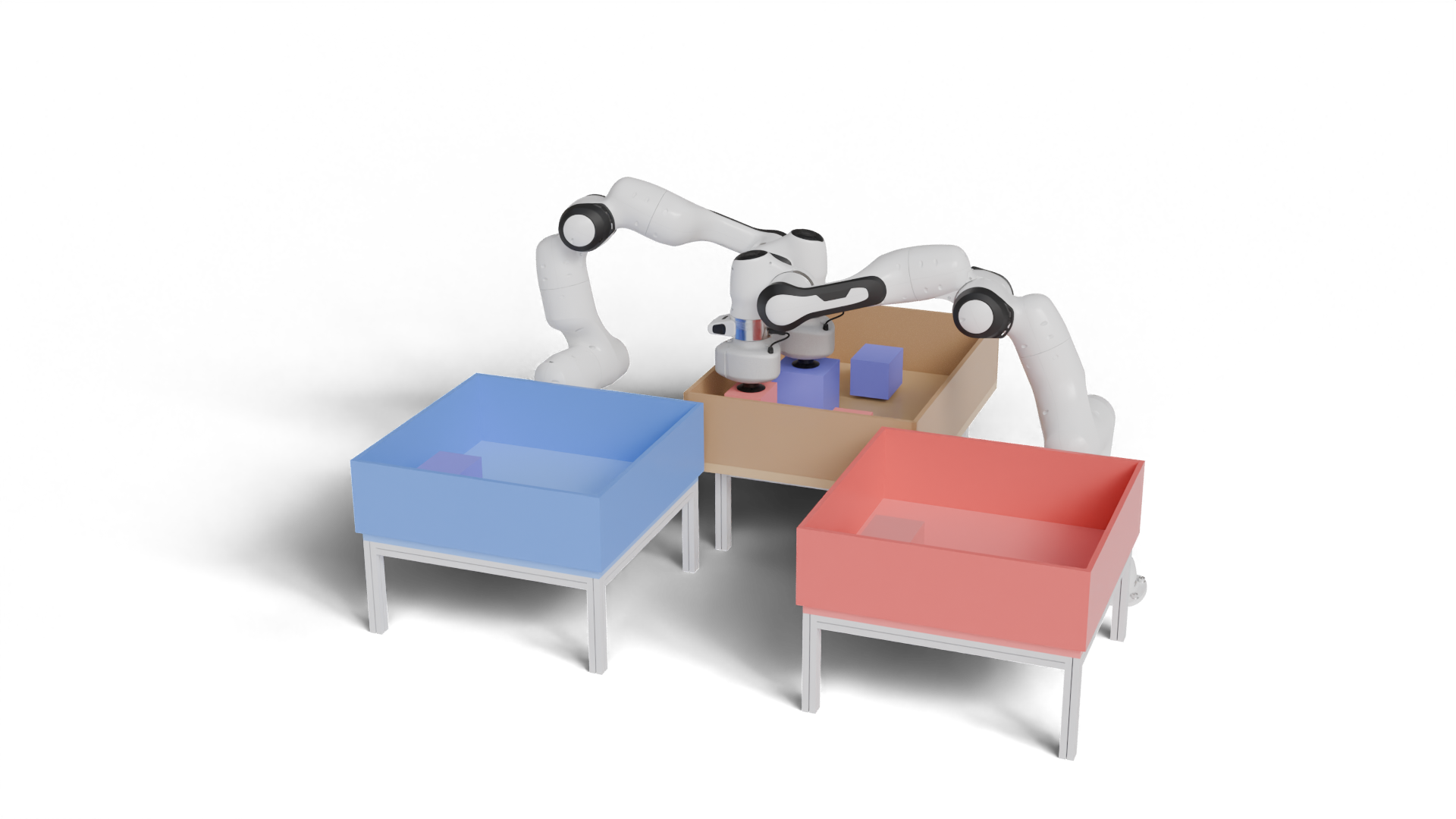}
    \hfill
    \expfig{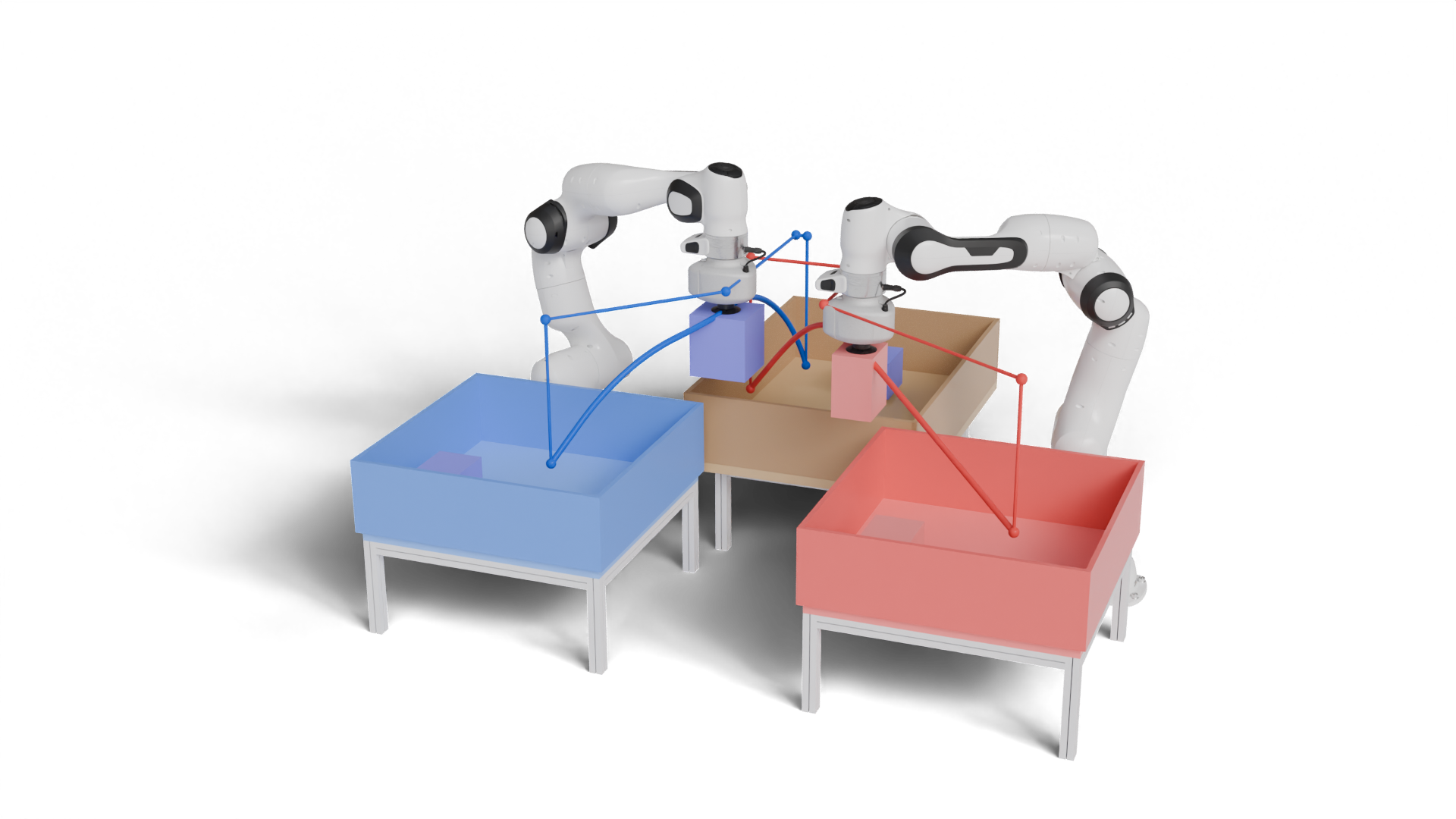}
    \hfill
    \expfig{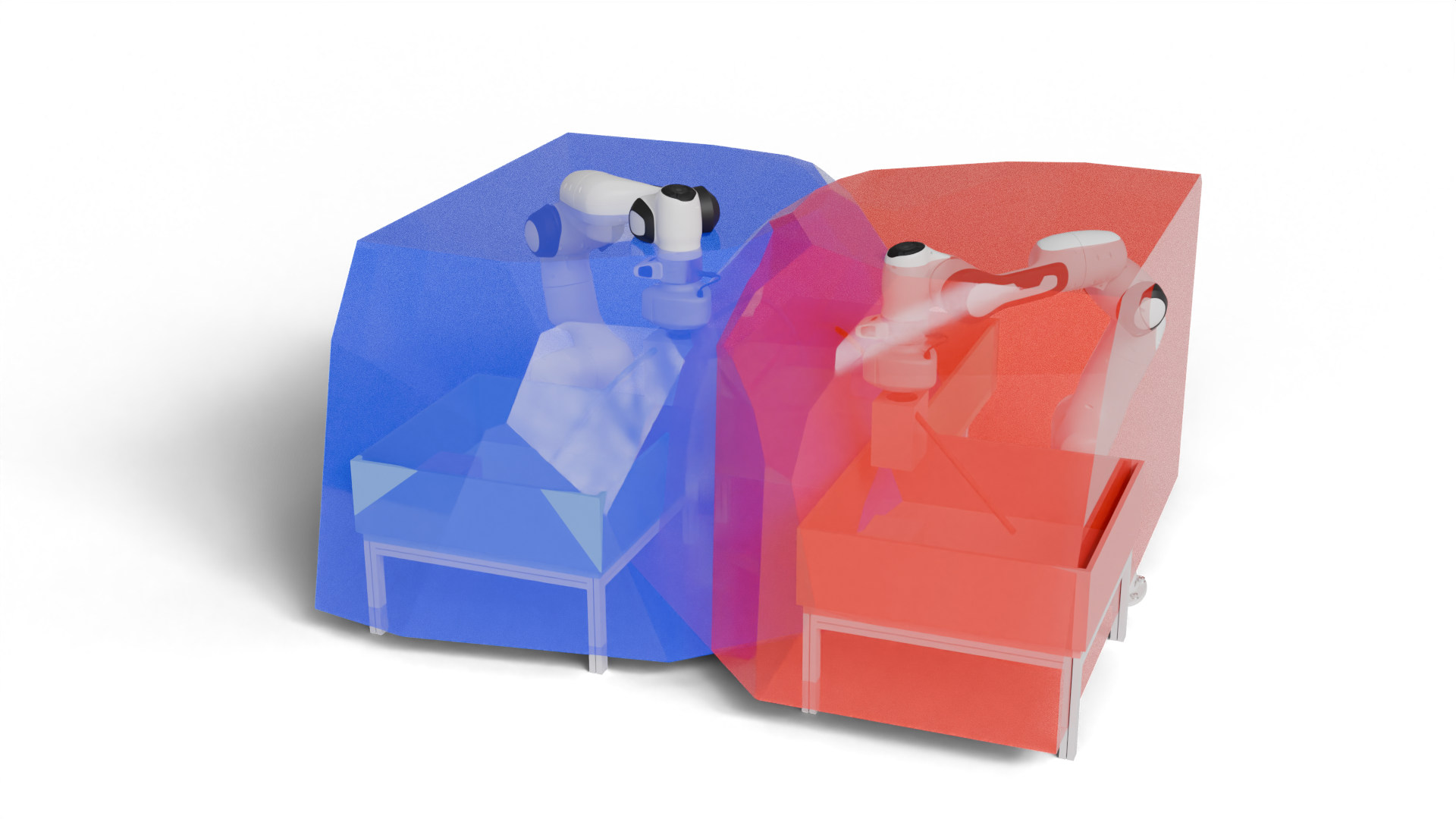}
    \hfill
    \expfig{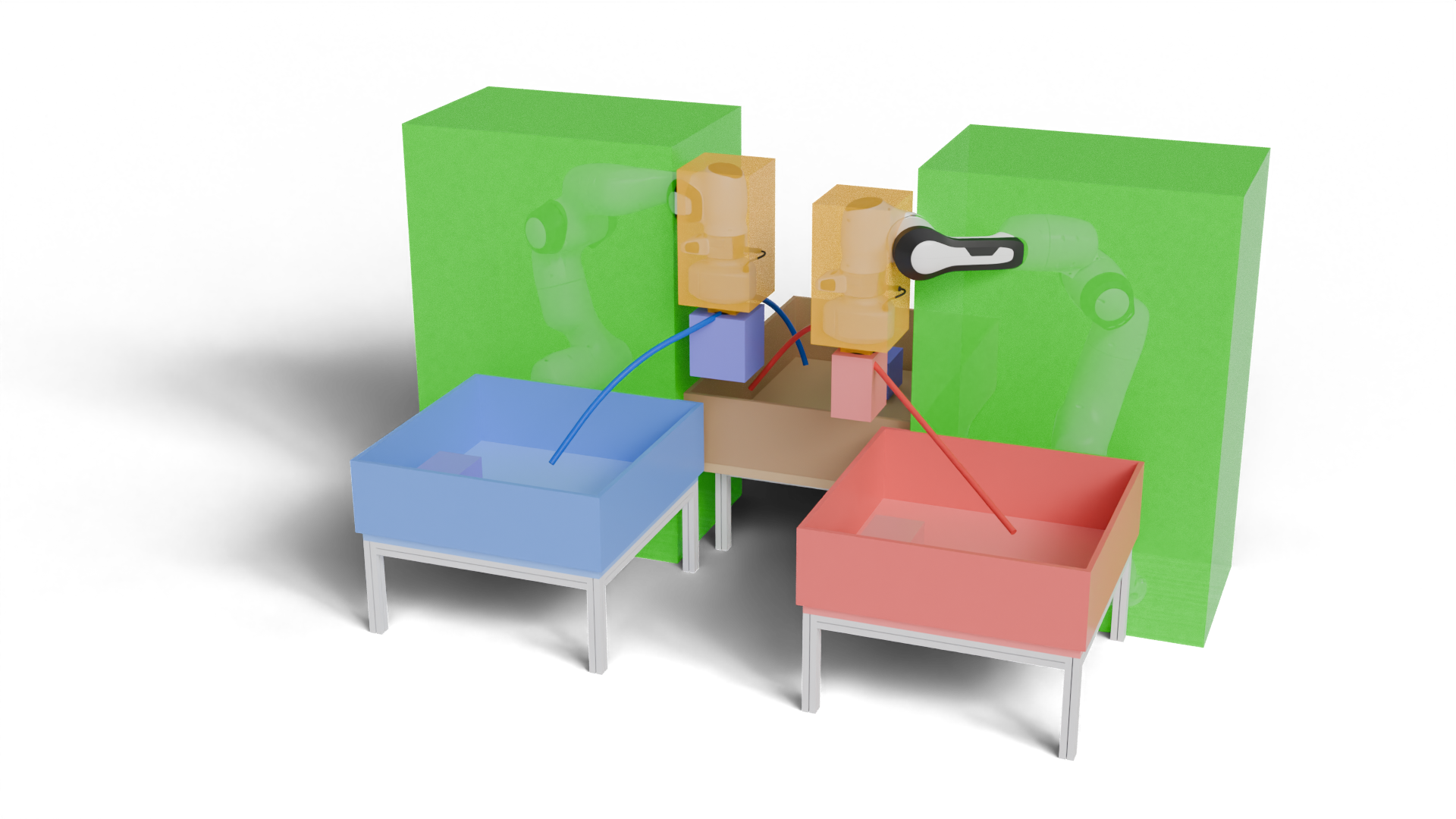}
    
    \caption{Dual-arm bin unloading in task space. From left to right: \textit{(first)} The task: two Franka manipulators move packages from a central bin to their respective offload zones. \textit{(second)} The waypoint-based initialization (straight-line segments) and the optimized minimum-time trajectories with our proposed BMTP approach. \textit{(third)} Visualization of the valid domains for each gripper position. \textit{(fourth)} Visualization of the axis-aligned bounding box collision geometries used during trajectory optimization. }
    \label{fig:taskspace_planning}
\end{figure*}

Next, we evaluate our BMTP approach on a dual-arm bin unloading problem in task space with convex obstacles, shown in~\Cref{fig:taskspace_planning}. These experiments demonstrate the efficacy of our approach on a common warehouse task and show that we achieve comparable performance to the method from~\cite{marcucci2025biconvex} while being able to handle a larger class of problems and exhibiting robustness to bad initialization.  

In the bin unloading task, two Franka Research 3 manipulators cooperatively move three blue and three red packages from a central bin to their corresponding blue and red offload bins. Each offload task consists of six trajectories: three that carry a package pair from the central bin to the offload bins, and three that return to the central bin. The dimensions and the placement of the packages are randomized. We plan in the six-dimensional task space where the first three dimensions represent the position of one gripper, and the last three that of the other. To convert the resulting task space plan to joint space we use differential inverse kinematics. See~\Cref{app:pallet_randomization,app:diffik} for details. 

All collision geometries are modeled as axis-aligned boxes, including the robot bases, gripper bodies, suction cups, packages, and bins. The collision geometry setup is shown in the rightmost panel of~\Cref{fig:taskspace_planning}. Therefore, the obstacles in the six-dimensional planning space correspond to the set of configurations where any pair of the boxes overlap. For a pair of axis-aligned boxes centered at $p_1, p_2\inR^3$ with half-widths $h_1, h_2\inR^3$, the set of configurations where the two boxes overlap can be written in closed form as the following convex polyhedron
$$\calO = \left\{(p_1,p_2)\in \mathbb{R}^6 \mid |p_{2,i}-p_{1,i}| \leq h_{1,i}+h_{2,i},~ i=1,2,3 \right\}.$$ 
As a result, all obstacles in this problem are convex and the planner needs to circumnavigate 88 obstacles when the robots are moving freely and 123 obstacles when they each are carrying a box.

We additionally set strict velocity and acceleration limits of $0.5\,\mathrm{m/s}$ and $1.0\,\mathrm{m/s^2}$, respectively, for each end effector. This corresponds to making the derivative sets Cartesian products of spheres with the corresponding radii.

We compare our approach to two baselines and include the simple waypoint heuristic for reference. The waypoint heuristic uses five straight-line segments to concurrently move the arms between the bins, and is used to warm-start all planners. We obtain the segment timing by using the approach in~\cite[\S VI]{marcucci2025biconvex}. The first baseline directly solves the nonlinear formulation~\eqref{eq:statement} with an off-the-shelf general nonlinear solver by enforcing collision avoidance at a discrete set of points. We provide the details in~\Cref{app:nonlinear_baselines}. The second baseline we compare against is SCSPlanning~\cite{marcucci2025biconvex}, which computes smooth minimum-time trajectories through sequences of convex safe sets which we compute using the edge inflation algorithm~\cite{werner2025superfast}. We call this baseline Edge Inflation + SCSPlanning (EI+SCS) and provide specific implementation details in~\Cref{app:SCSbaseline}. 

For all approaches, we employ the conservative collision checker from~\Cref{app:collision_checking}, which we run on all trajectory segments in parallel. This checker is conservative in the sense that it guarantees that no trajectory is incorrectly classified collision-free.

For this task we optimize degree-six \bez splines, which guarantee the feasibility of the initial polygonal trajectory with velocity and acceleration constraints. We do not enforce continuous acceleration between trajectory segments on any of the approaches, as these constraints are not supported by the EI+SCS baseline. For the BMTP approach we use a plane degree equal to one. We run both EI+SCS and BMTP until the relative change in cost between successive feasible iterates falls below 1\%. For the nonconvex approach with the discretized collision avoidance constraints we use 25 evenly spaced constraints with an influence distance of $1\times10^{-2}\,\mathrm{m}$ and lower bound of $5\times10^{-3}\,\mathrm{m}$ per segment. We employ SNOPT to solve these programs~\cite{gill2005snopt}. To solve the convex subproblems in our BMTP approach and in SCSPlanning we employ the Clarabel solver~\cite{goulart2026clarabel}. The results across 50 random instances (300 trajectory optimizations per approach) are summarized in~\Cref{tab:taskspace_comparison}.

\begin{table}[!h]
    \centering
    \caption{Statistics of Randomized Dual-arm Bin Unloading Tasks (N=300). All results are reported as mean $\pm$ std.}
    \label{tab:taskspace_comparison}
\resizebox{\linewidth}{!}{%
  \begin{tabular}{l|c|ccc}
      \toprule
      & Waypoint & Nonconvex (disc.) & EI+SCS & Ours \\
      \midrule
      Traj.\ duration [s]    & $11.06 \pm 1.24$ & $4.39 \pm 2.50$ & $\mathbf{2.77 \pm 0.24}$ & $2.82 \pm 0.21$ \\
      Computation time [ms]  & $5.3 \pm 0.7$ & $2762.2 \pm 1394.1$ & $204.0 \pm 49.7$ & $\mathbf{187.9 \pm 43.3}$ \\
      Computation time p95 [ms]  & $6.3$ & $5135.3$ & $295.4$ & $\mathbf{269.2}$ \\
      Success rate [\%]      & 100.0 & $95.3$ & $\mathbf{100.0}$ & $\mathbf{100.0}$ \\
      Collision-free [\%]    & 100.0 & $94.8$ & $\mathbf{100.0}$ & $\mathbf{100.0}$ \\
      \bottomrule
  \end{tabular}}

\end{table}

Both BMTP and EI+SCS achieve a $100\%$ success rate and produce collision-free trajectories on every instance. The discretized nonconvex baseline is far slower, taking over $2.7$ seconds of computation on average, and less reliable, failing on $4.7\%$ of instances and producing collisions on $5.2\%$. BMTP achieves trajectory quality comparable to EI+SCS ($2.82$ vs.\ $2.77$ seconds average duration) while computing slightly faster both on average ($188$ vs.\ $204$ ms) and at the 95th percentile ($269$ vs.\ $295$ ms).  Across the $300$ plans, BMTP converged after solving $6.92 \pm 1.31$ trajectory updates on average, of which $4.83 \pm 0.64$ produced an improving collision-free trajectory. The timing breakdown in~\Cref{fig:timing_breakdown} shows that EI+SCS spends about an eighth of its computation on constructing the sequence of convex sets.

The employed waypoint heuristic in the evaluation moves both arms concurrently. A natural simplification is to naively move the arms along the same paths but in sequence rather than concurrently. This results in a much higher cost initialization that consists of 10 trajectory segments instead of 5. We rerun the entire evaluation with this simplified initialization and report the results in \Cref{tab:taskspace_comparison_sequential}. Interestingly, we find that if we initialize our BMTP approach sequentially, the trajectory duration is not meaningfully impacted and ends up improving marginally and additionally beating EI+SCS with the stronger initialization. The cost of the sequential initialization is reflected in the 2.75-times increase of the computation time. With this warm start, it took $10.86 \pm 1.95$ trajectory updates on average for our approach to converge, of which $6.63 \pm 0.76$ produced improving collision-free trajectories.

\begin{figure}[t]
    \centering
    \includegraphics[width=\columnwidth]{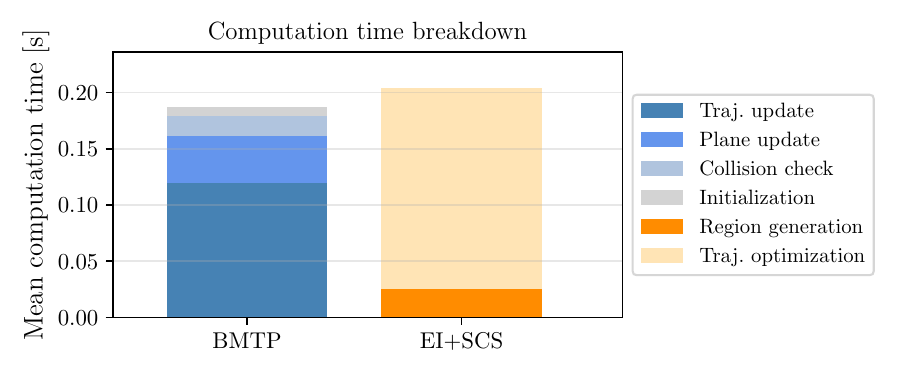}
    \caption{Timing breakdown of our BMTP approach and the EI+SCS baseline. We see that around an eighth of the computation time is spent on constructing the sequence of convex sets.}
    \label{fig:timing_breakdown}
\end{figure}

\begin{table}[!h]
    \centering
    \caption{Statistics of Randomized Dual-arm Bin Unloading Tasks using Sequential Waypoint Planner (N=300). All results are reported as mean $\pm$ std.}
    \label{tab:taskspace_comparison_sequential}
\resizebox{\linewidth}{!}{%
  \begin{tabular}{l|c|cc}
      \toprule
      & Waypoint (sequential) & Ours (sequential) & Ours (concurrent) \\
      \midrule
      Traj.\ duration [s]    & $18.44 \pm 2.42$ &$ \mathbf{2.70 \pm 0.21}$& $2.82 \pm 0.21$  \\
      Computation time [ms]  & $7.5 \pm 0.9$ & $516.6 \pm 149.7$& $\mathbf{187.9 \pm 43.3}$ \\
      Success rate [\%]      & $100.0$ & $100.0$ & $100.0$ \\
      Collision-free [\%]    & $100.0$ & $100.0$ & $100.0$ \\
      \bottomrule
  \end{tabular}}
\end{table}

On this task we compared our BMTP against SCSPlanning, which is designed to solve exactly these kinds of minimum-time problems with velocity and acceleration constraints quickly and reliably. Because SCSPlanning requires a precomputed sequence of safe sets, we pair it with the fast Edge Inflation safe-set construction approach. We found that BMTP matches the trajectory quality and computation time of EI+SCS, while constructing and adapting the necessary separating planes on the fly, without precomputing and fixing the sequence of safe sets. The nonconvex baseline, which enforces collision avoidance at discrete points and solves the resulting nonlinear program with a general solver, is substantially slower and does not reliably return collision-free trajectories. We explored enforcing collision avoidance through the bilinear formulation~\eqref{eqn:biconvex_problem} and directly solving the program with a nonlinear solver, but this led to frequent solver failures and would not solve reliably. Finally, we observe that BMTP is capable of finding high-quality trajectories despite weaker warm starts. Using a far longer initial path leaves the resulting optimized trajectory essentially unchanged at a slight increase of computational cost. This is a structural advantage over methods that optimize within a fixed convex decomposition which is locally created around an initial feasible path, as the quality of the final trajectory is inherently limited by the quality of the safe sets.


\subsection{Hardware validation}\label{ssec:Hardware_validation}

In this section, we validate our simulation results from the bin unloading task in~\Cref{ssec:pallet_unloading} on real hardware. To match our available hardware, we apply some slight modifications to the experimental setup. We use two Franka Research 3 robot manipulators with the standard Franka grippers equipped with 3D-printed Fin Ray fingers which have a strip of 3M TB641 grip tape on the grasping surface. Due to the increased size of the grippers, we only consider unloading two red and two gray bricks in two concurrent, dual-arm motions.

The system has no perception, so we fix the brick start positions to a 2$\times$2 grid and permute their arrangement. We find that 5 of the 6 possible permutations can be unloaded in two concurrent, dual-arm motions. To avoid collisions, we again employ axis-aligned boxes as shown in~\Cref{fig:hardware_setup}. This setup has 120 obstacles in the planning space when the arms are each holding a brick, and 93 when they are moving freely. For the different arrangements we hard-code the sequence of actions, e.g., which target positions to plan to and when to open and close the gripper, but perform the planning on the fly after every motion of the arm. To avoid placing the bricks manually on the designated spots, we have the robots first stage the brick-arrangement by executing the action sequence in reverse and then unloading it. As such, the robots can autonomously cycle through a full evaluation of a planning approach. Each cycle of staging and unloading the bin consists of 10 motion plans; therefore, the entire evaluation consists of planning and executing 50 motion plans.

As in the simulation experiments, we plan in the six-dimensional task space of the two end-effector positions and convert to configuration space with differential inverse kinematics, see~\Cref{app:diffik}. We reuse the same planner settings (degree-6 trajectory splines, degree-1 planes, and a 1\% convergence threshold), tightening only the velocity and acceleration limits to $0.4\,\mathrm{m/s}$ and $0.8\,\mathrm{m/s^2}$ per end-effector. Each arm is driven by an independent instance of our open-source Franka driver~\cite{werner2026franka}, which tracks the streamed trajectory with a $1\,\mathrm{kHz}$ joint-impedance controller (see \Cref{app:armdriver}). We compare against the same Waypoint and EI+SCS baselines.

We summarize the results for the 50 trajectories per approach in~\Cref{tab:hardware_validation}.~\Cref{fig:hw_unloading_seq} shows a representative unloading sequence that was planned with our BMTP approach. All three planners achieve $100\%$ planning success and $100\%$ collision-free execution. BMTP and EI+SCS perform similarly on the measured metrics on average: end-to-end wall time per 10-phase cycle is $38.96$\,s for BMTP vs $39.60$\,s for EI+SCS, and average planning time per stage is $145.1$\,ms vs $150.2$\,ms. The per-phase computation time breakdown is shown in~\Cref{fig:timing_breakdown_hw}. Across the $50$ planning phases, the BMTP approach converged after solving $7.32 \pm 1.50$ trajectory updates on average (range $5$--$11$), of which $5.72 \pm 0.83$ produced feasible and improving trajectories. We measured a root-mean-squared tracking error for each end-effector of less than 3 mm for all three approaches. We observe that our method computes equal-quality trajectories in roughly the same time as EI+SCS. Both approaches execute roughly four times faster than the Waypoint baseline. On one hardware plan, the Waypoint planner tripped a driver-side safeguard that further slowed execution to keep joint velocities within the hardware limits.

\begin{figure*}[t]
    \centering
    \includegraphics[width=0.48\textwidth]{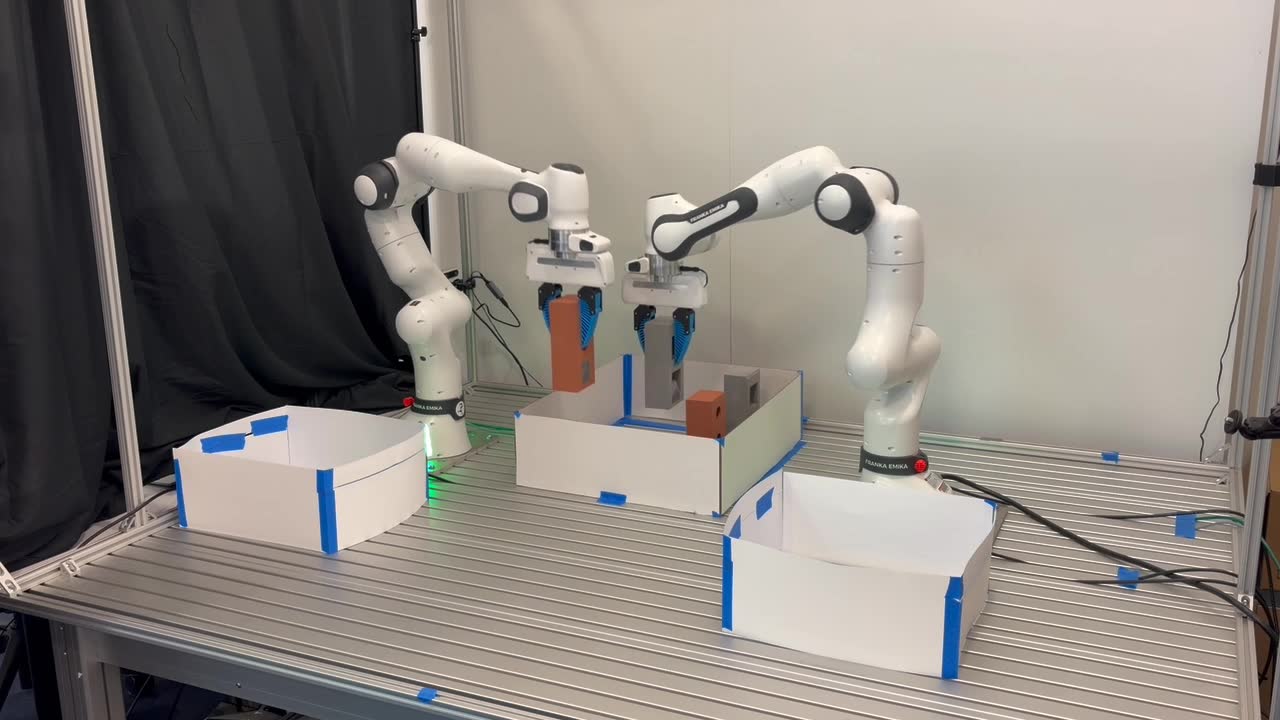}
    \hfill
    \includegraphics[width=0.48\textwidth]{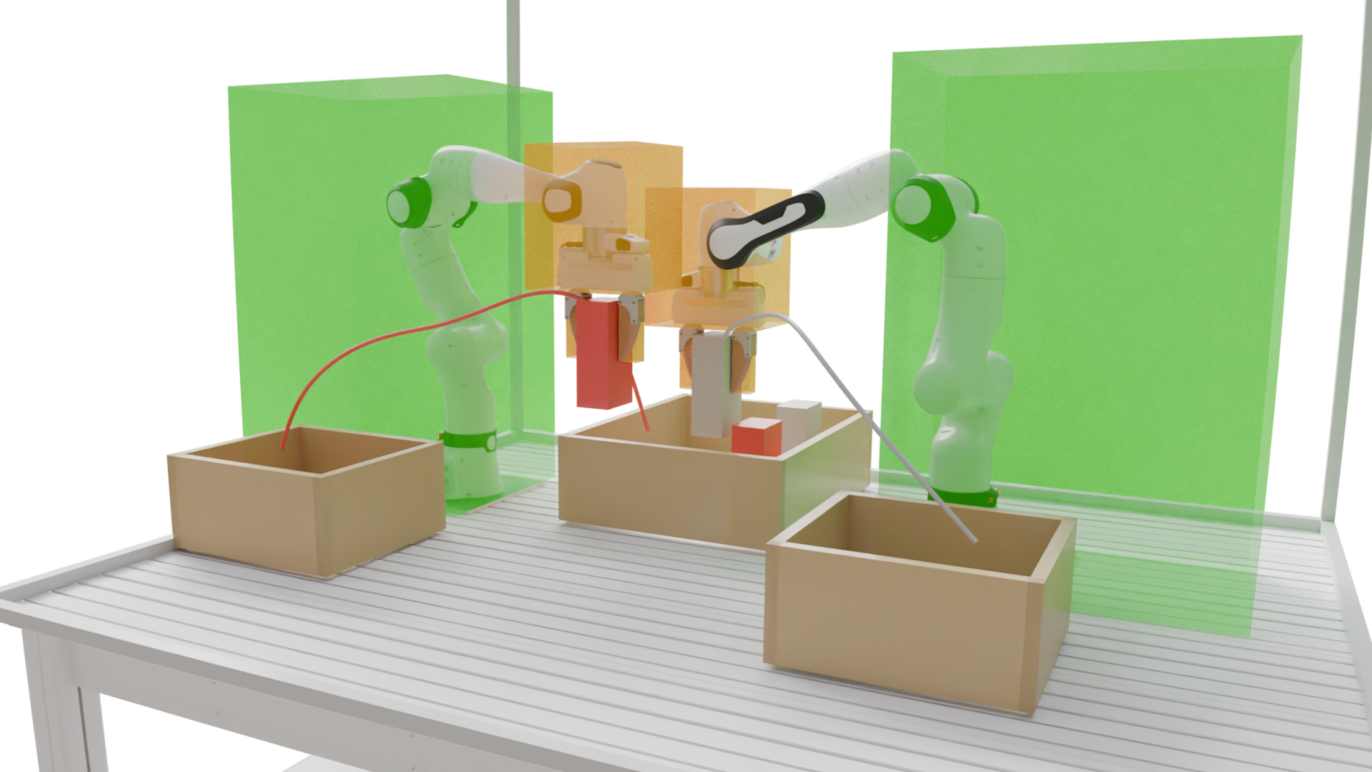}
    \caption{Unloading task used to validate BMTP. Two Franka Research 3 manipulators are employed to unload four bricks from a common bin. The planner operates in the six-dimensional task space given by the two end-effector positions; the resulting trajectories are converted to configuration space using differential inverse kinematics. \textit{(left)} Hardware setup. \textit{(right)} Visualization of the modeled system along with the axis-aligned box collision geometries used for planning. The current trajectory being executed is shown in red and gray. At every point along the trajectory, our planner continuously guarantees that all constraints are satisfied and that the collision geometries do not overlap. }
    \label{fig:hardware_setup}
\end{figure*}

\newcommand{\seqfig}[1]{\includegraphics[width=0.19\textwidth]{#1}}
\begin{figure*}[t]
    \centering
    \seqfig{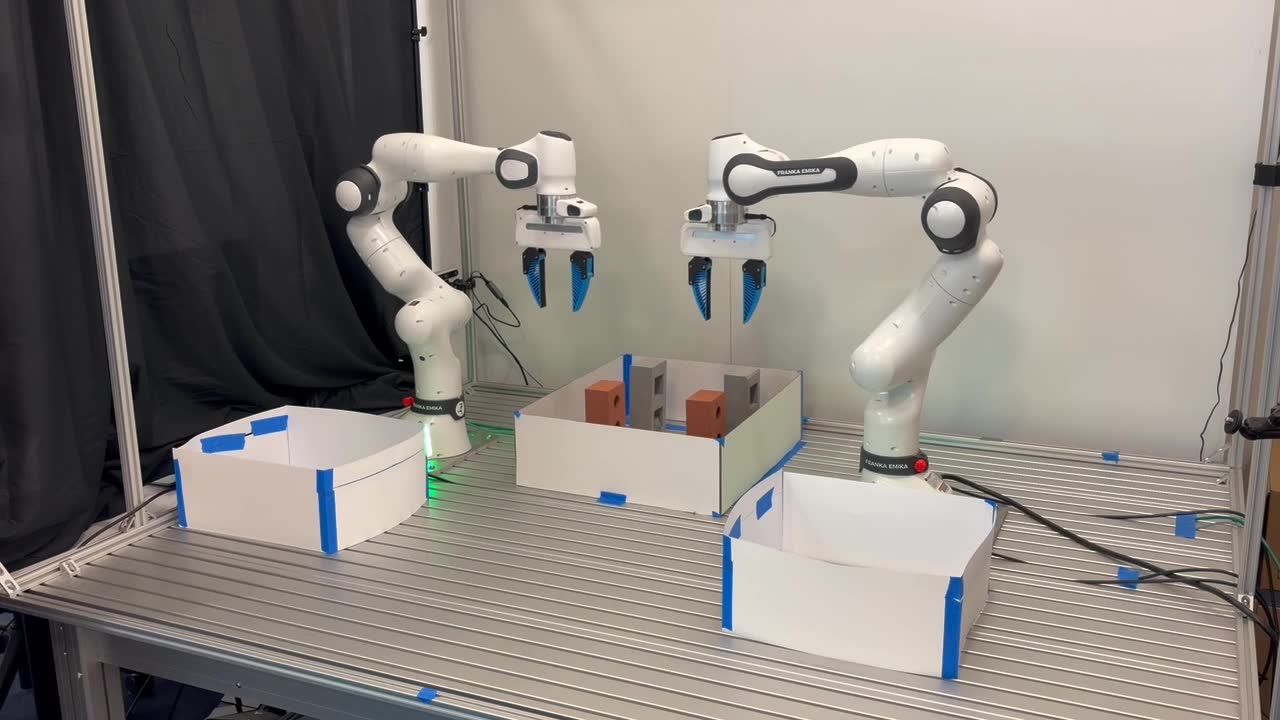}\hfill
    \seqfig{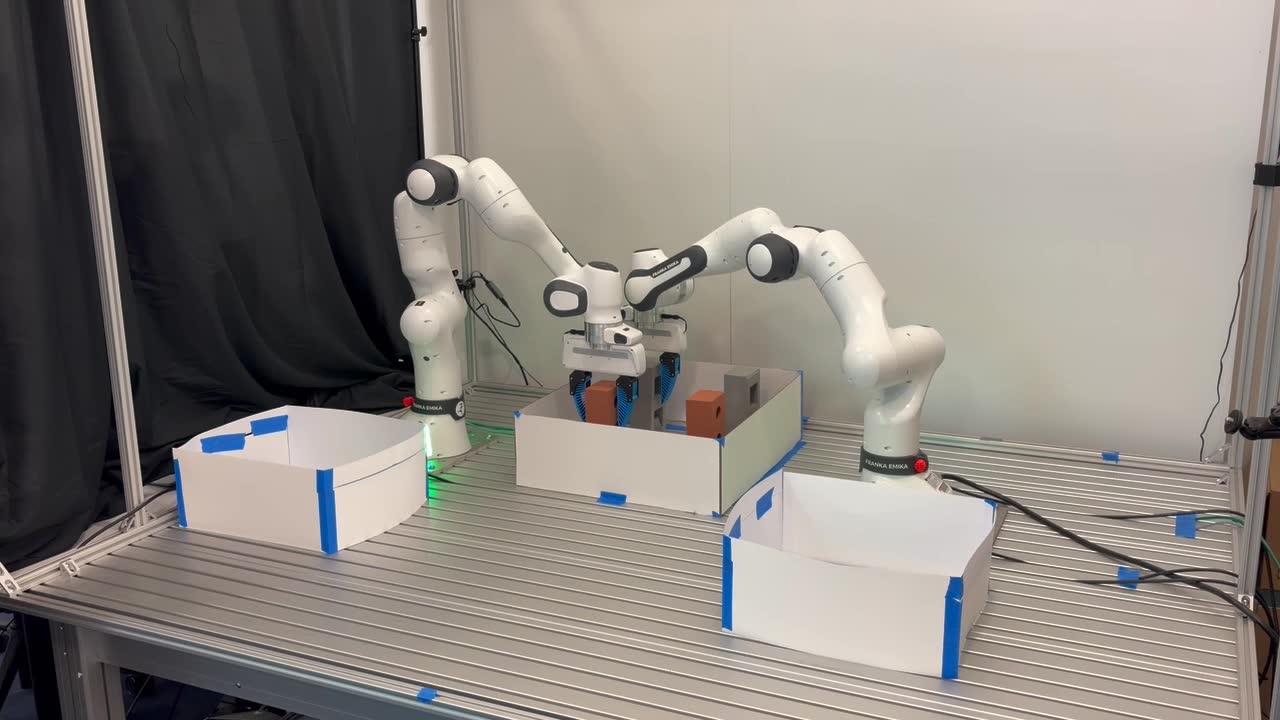}\hfill
    \seqfig{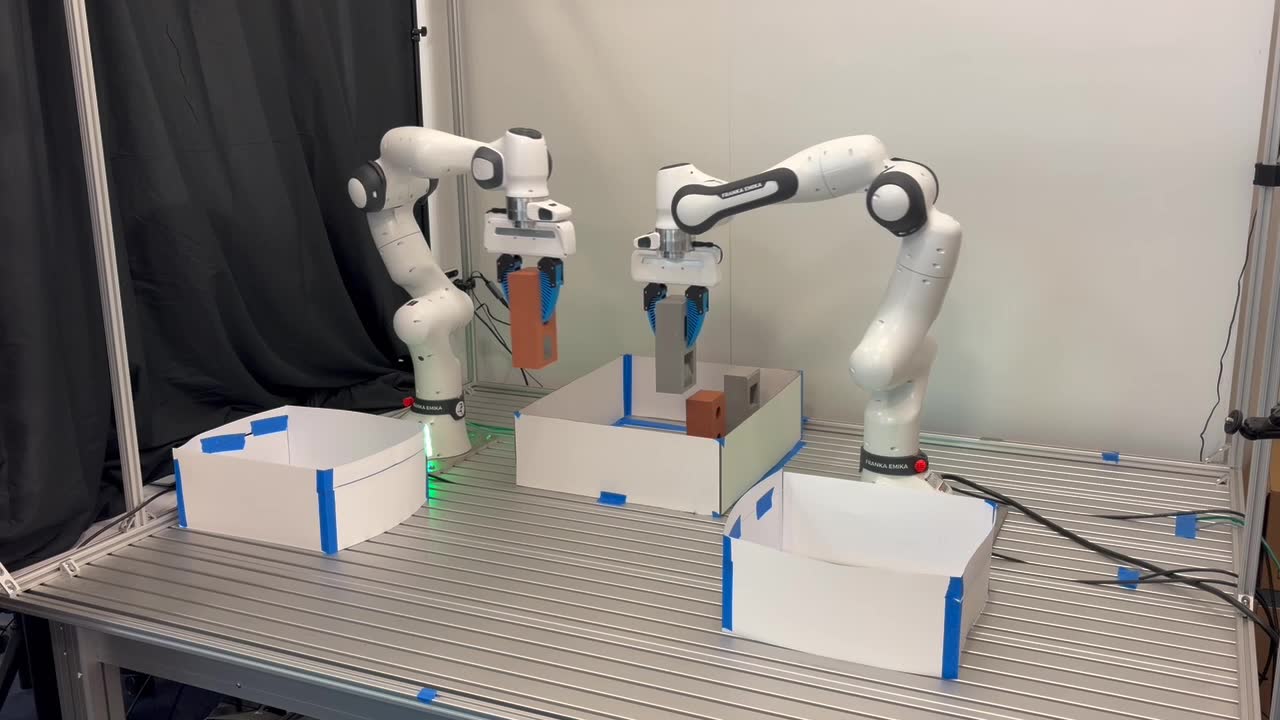}\hfill
    \seqfig{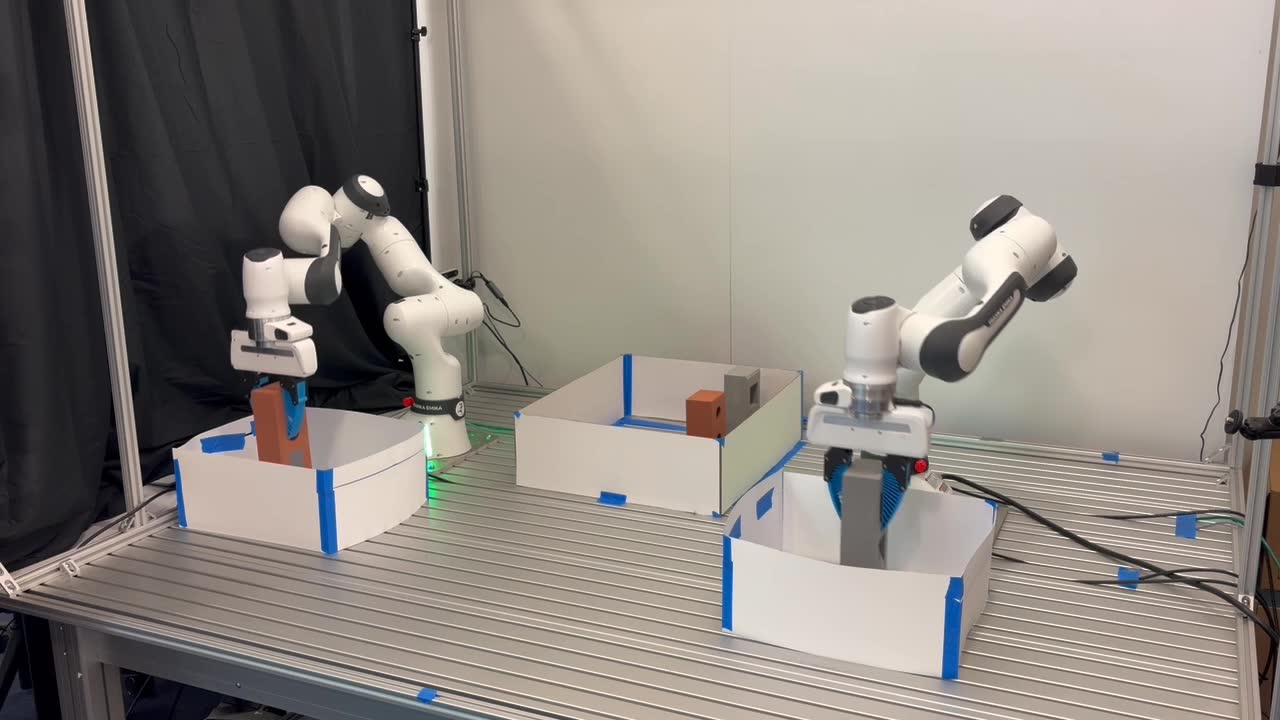}\hfill
    \seqfig{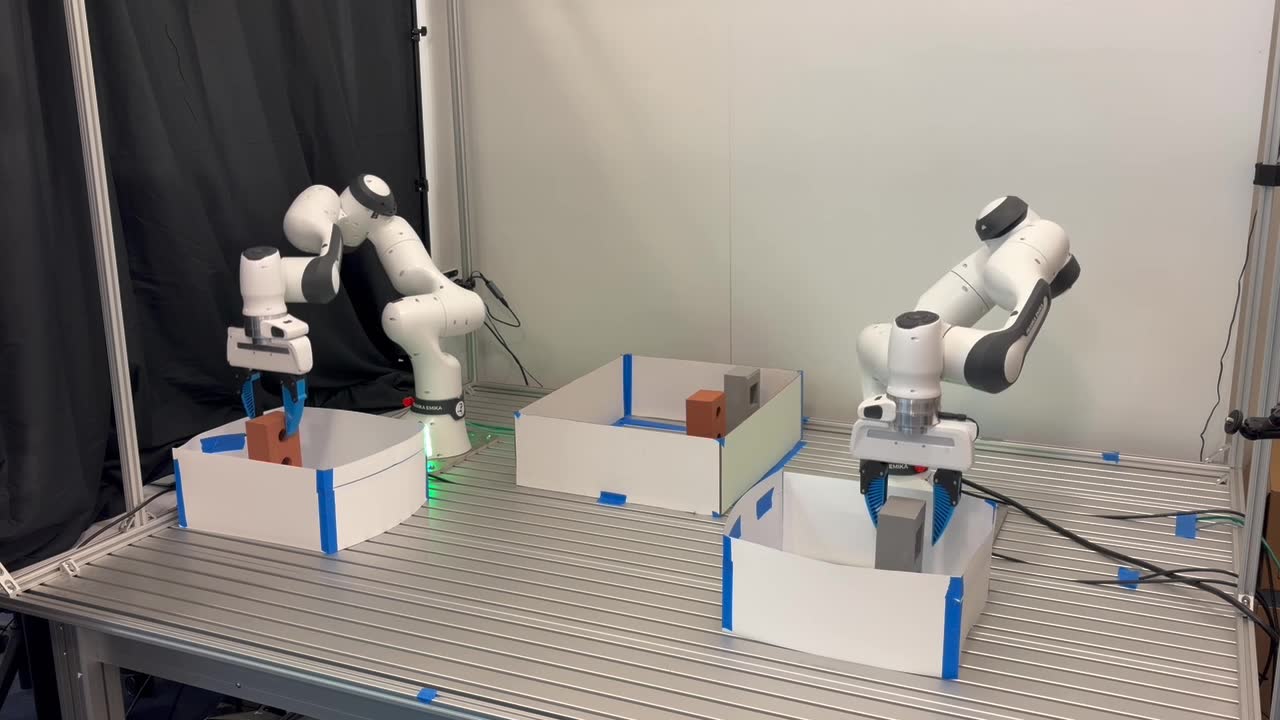}\\[4pt]
    \seqfig{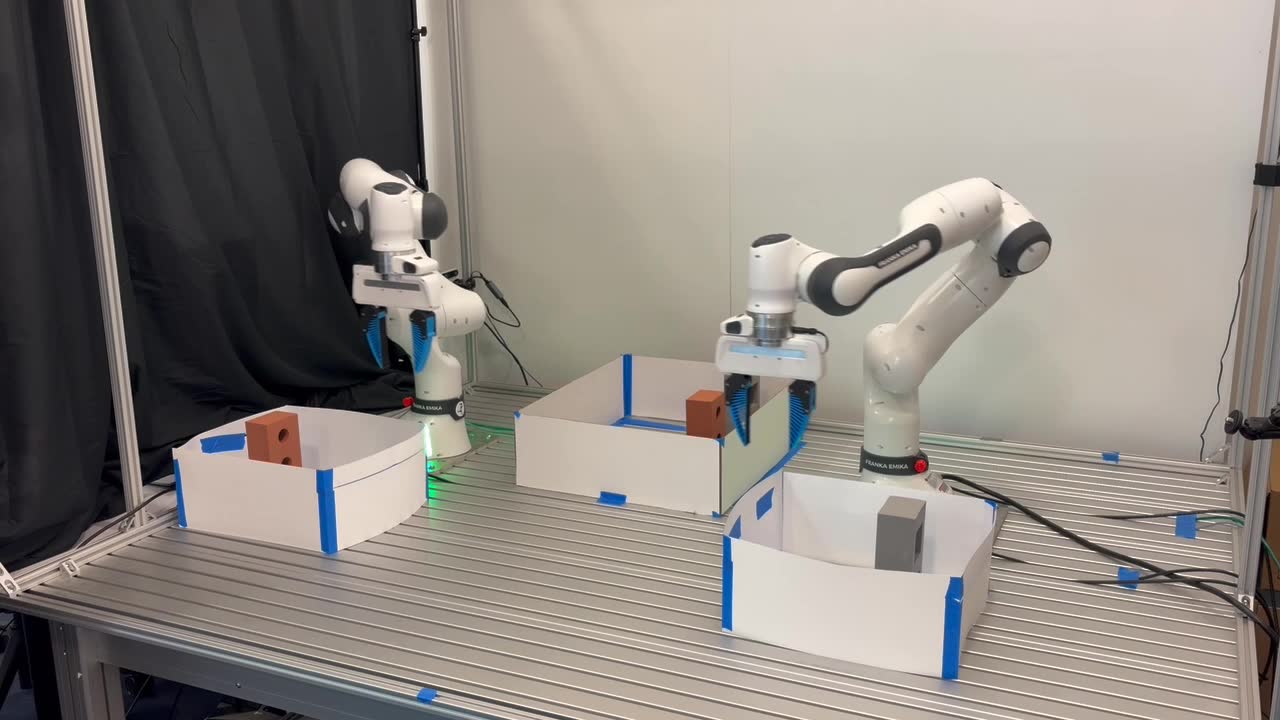}\hfill
    \seqfig{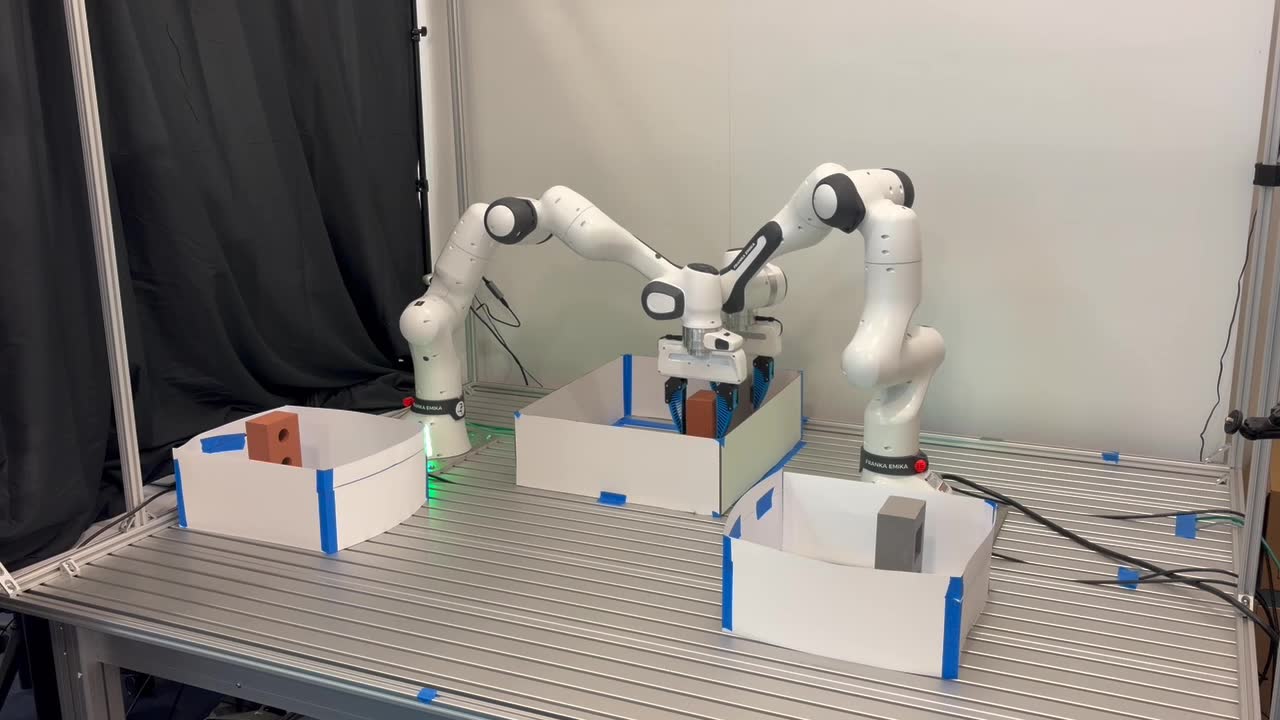}\hfill
    \seqfig{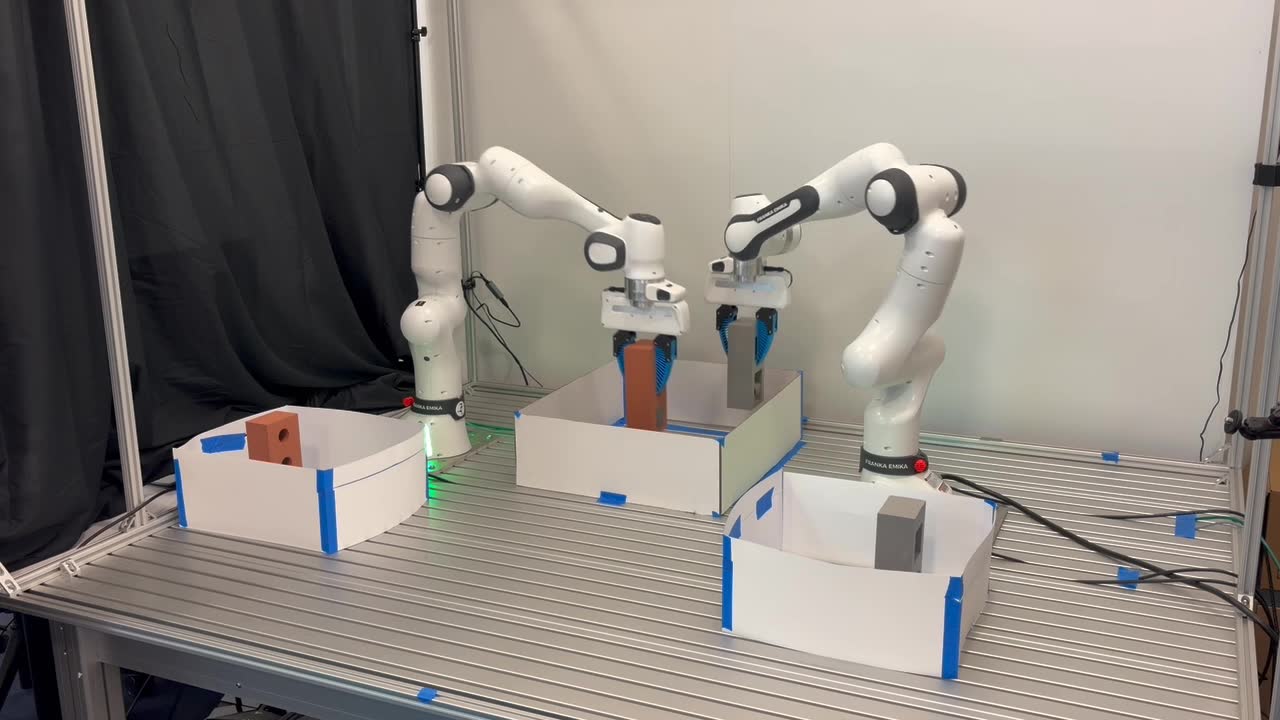}\hfill
    \seqfig{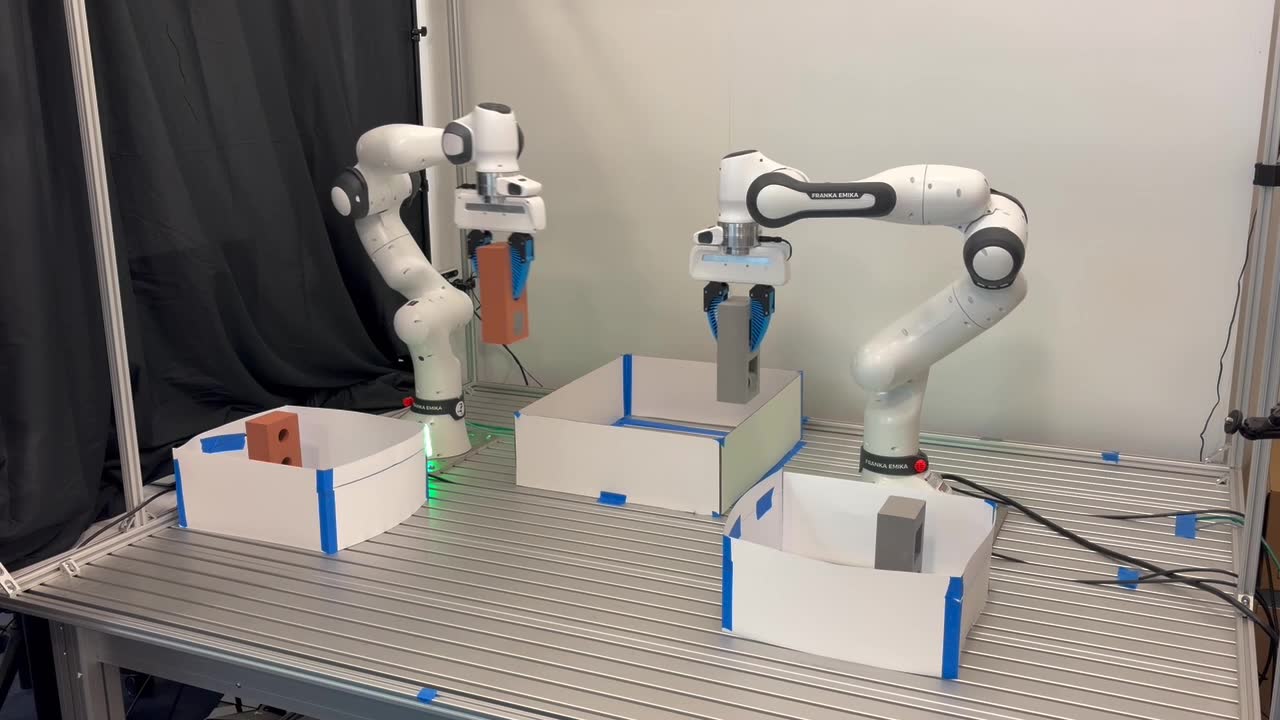}\hfill
    \seqfig{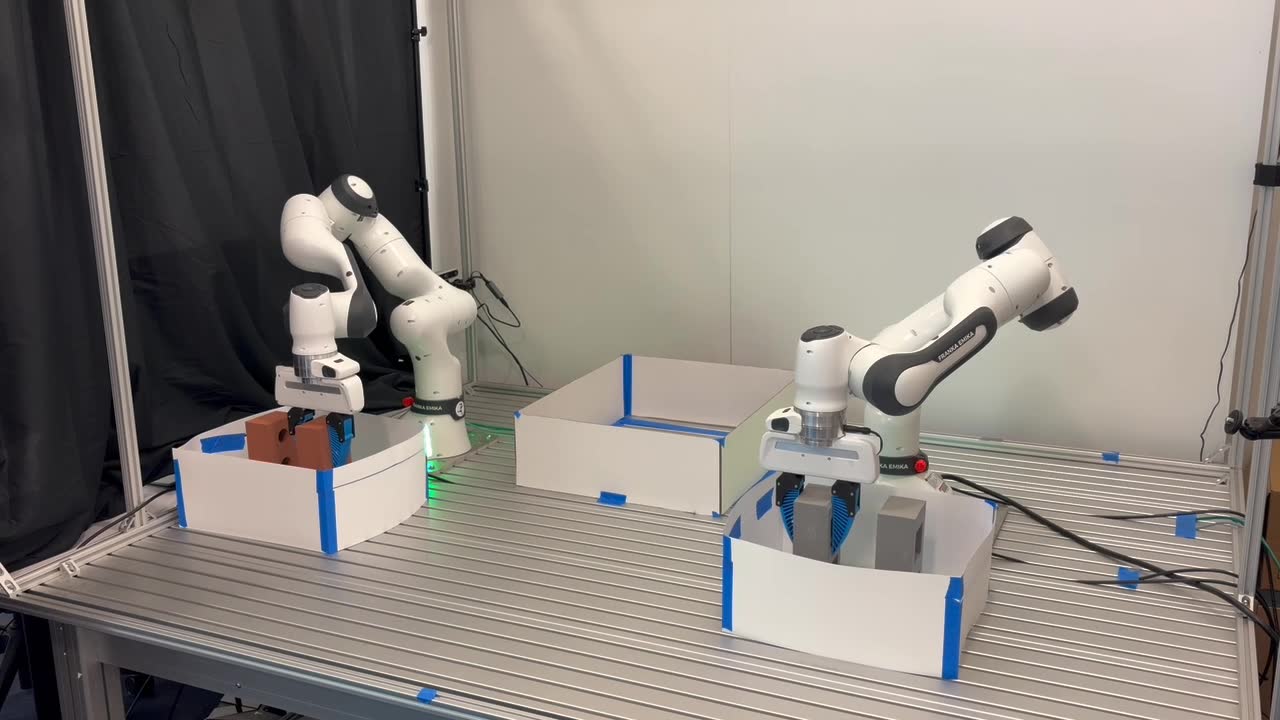}
    \caption{Hardware execution of the bin unloading problem with our BMTP approach. The two robot manipulators transfer all four bricks from the central bin to the corresponding offload bins: the red bricks to the left bin and the gray bricks to the right. Frames are shown in chronological order, left-to-right, top-to-bottom.}
    \label{fig:hw_unloading_seq}
\end{figure*}

\input{sections/hw_validation_table.tex}

\begin{figure}[t]
    \centering
    \includegraphics[width=\columnwidth]{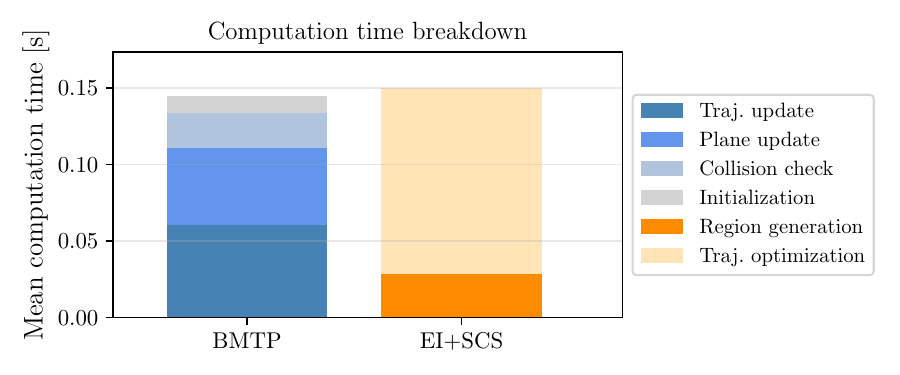}
    \caption{Mean computation time breakdown for the hardware experiments.
    }
    \label{fig:timing_breakdown_hw}
    \vspace{-0.1cm}
\end{figure}

%% file: sections/hw_validation_table.tex
\begin{table}[!h]
    \centering
    \caption{Hardware results for all five Stage \& Unload cycles. All results are reported as mean $\pm$ std.}
    \label{tab:hardware_validation}
    \resizebox{\linewidth}{!}{%
        \begin{tabular}{l|c|cc}
            \toprule
            & Waypoint & EI+SCS & BMTP (ours) \\
            \midrule
            Aggregate plan duration [s] & 131.14 $\pm$ 11.89 & 32.75 $\pm$ 1.70 & $\mathbf{32.24 \pm 1.82}$ \\
            Avg plan time per stage [ms] & $3.0 \pm 0.1$ & 150.2 $\pm$ 8.0 & $\mathbf{145.1 \pm 10.9}$ \\
            End-to-end wall time [s] & 139.53 $\pm$ 12.17 & 39.60 $\pm$ 1.69 & $\mathbf{38.96 \pm 2.01}$ \\
            Success rate [\%] & 100 & 100 & 100 \\
            Collision-free [\%] & 100 & 100 & 100 \\
            \bottomrule
        \end{tabular}}
\end{table}


%% file: sections/discussion.tex
\section{Discussion}\label{sec:discussion}
In this section we briefly discuss our key findings and some limitations of our BMTP approach.



\paragraph{Performance}
On the bin unloading task, BMTP matches the trajectory quality of EI+SCS and is slightly faster to compute, both in simulation and on hardware~(\Cref{ssec:pallet_unloading,ssec:Hardware_validation}). EI+SCS builds on SCSPlanning~\cite{marcucci2025biconvex}, a fast decomposition-based planner used in warehouse applications, which we pair with fast on-the-fly convex-set generation~\cite{werner2025superfast}. Besides being performant in terms of raw computation time and trajectory cost, BMTP does not require a precomputed convex decomposition of the free space, which needs to be recomputed whenever the environment changes. And as seen in the village example~(\Cref{ssec:village}), our BMTP can significantly shorten trajectories by jumping around obstacles. This is not possible for an approach like EI+SCS, and other methods that build and constrain the resulting trajectory to be contained in a sequence of convex sets around an initial trajectory~\cite{chen2016online, liu2017planning, wu2024optimal}. 

\paragraph{Generality}
Beyond the comparable performance to EI+SCS, BMTP supports a larger problem class. In the village experiment~(\Cref{ssec:village}), we jointly optimize the time and shape of a snap-continuous trajectory with bounded velocity, acceleration, jerk, and snap. In order to support this combination of costs and constraints, the method in~\cite{marcucci2024fast} requires a combination of a linearization step and a trust region, leading to a planning algorithm that is significantly slower and highly sensitive to the chosen trust-region parameters.

\paragraph{Robustness to bad initialization}
We also observe in the bin unloading experiment that BMTP exhibits some robustness to poor initialization~(\Cref{ssec:pallet_unloading}). Our BMTP ended up producing the overall lowest-cost trajectories on average when using the much weaker initialization.

\paragraph{Limitations}
Our approach also has several limitations. First, it currently supports only convex obstacles. Extending it to nonconvex obstacles, and ultimately to planning directly in configuration space, is an important direction for future work, but poses a substantial challenge. Although the polar is convex even for a nonconvex obstacle, the resulting planes separate the trajectory from the obstacle's convex hull. In many practical settings, such as configuration-space planning for a robot manipulator, this is overly conservative. It thus remains open whether the trajectory can be separated locally from nonconvex obstacles efficiently. Second, although the feasible iterates of BMTP never increase the objective cost of~\eqref{eqn:biconvex_problem} and the procedure converges to high-quality trajectories in practice, the point it converges to need not be a local optimum of~\eqref{eqn:biconvex_problem}. Third, it requires a collision-free path as initialization and does not itself address finding an initial feasible path. In practice, we found either a simple waypoint planner or a sampling-based planner such as RRT~\cite{kuffner2000rrt} to be effective for this purpose. Furthermore, we do not have a principled way to select the degree of the separating planes, although a plane degree of one worked well across all our experiments. Finally, while the trajectory update formulation allows derivative constraints of arbitrary degree, the numerics can become challenging beyond snap.

%% file: sections/software.tex
\section{Software}\label{sec:software}
We provide a pip-installable, open-source implementation of our planner in Python along with this paper. The software, PyBMTP, is available at \url{https://github.com/wernerpe/pybmtp}. It uses Drake~\cite{drake} to parse the convex subproblems, and Clarabel to solve them~\cite{goulart2026clarabel}.

Given a sequence of waypoints, which construct a collision-free polygonal trajectory between the start and the goal, a list of convex obstacles, the admissible domain, and convex velocity and acceleration sets, planning a minimum-time
collision-free \bez trajectory takes only a few lines:

\begin{lstlisting}[language=Python]
import numpy as np
import pydrake.all as pd
from pybmtp import solve_minimum_time, Limits

# velocity/acceleration limits
z = np.zeros(dim)
S = pd.Hyperellipsoid.MakeHypersphere
limits = Limits(velocity=S(v_max, z),
                acceleration=S(a_max, z))

res = solve_minimum_time(
    waypoints, obstacles, domain, limits)

traj = res.trajectory  # min-time curve
T = res.total_time     # duration [s]
\end{lstlisting}

%% file: sections/conclusion.tex
\section{Conclusion}\label{sec:conclusion}
We have presented BMTP, a biconvex minimum-time planner which can effectively plan around convex obstacles, supports derivative and continuity constraints to an arbitrary degree, and does not require a costly convex decomposition of the free space up front. We jointly convexify the minimum-time objective and the derivative constraints through a change of variables, and reformulate the collision-avoidance constraints as a search for time-varying separating planes, transforming the problem to a biconvex program. Our biconvex procedure refines an initial collision-free path by alternating between computing maximum-margin separating planes and re-optimizing the trajectory, and only tags obstacles that the current iterate collides with, which allows the trajectory to jump around obstacles and escape some bad local solutions. The procedure can be initialized with a collision-free path as a simple polygonal curve from a sampling-based planner, and never increases the trajectory duration across iterations. 

We demonstrated the generality of our approach by planning a snap-continuous minimum-time trajectory for a quadrotor through a cluttered village environment, a combination of costs and constraints that, to the best of our knowledge, no existing decomposition-based planner supports out of the box. On a dual-arm bin unloading benchmark, our approach matches the quality and computation time of a state-of-the-art decomposition-based planner while being more general and more robust to bad initialization. Additionally, we validated our approach on hardware with two Franka manipulators. In future work, we plan to extend the approach to nonconvex obstacles and to planning directly in configuration space, and to leverage hardware acceleration to further improve performance.

%% file: sections/acknowledgements.tex
\section*{Acknowledgments}
We are very grateful for the funding provided by the Office of Naval Research, Award Number N00014-23-1-2354. Research was also sponsored in part by the Department of the Air Force Artificial Intelligence Accelerator and was accomplished under Cooperative Agreement Number FA8750-19-2-1000. The views and conclusions contained in this document are those of the authors and should not be interpreted as representing the official policies, either expressed or implied, of the Department of the Air Force or the U.S. Government. The U.S. Government is authorized to reproduce and distribute reprints for Government purposes notwithstanding any copyright notation herein. We further thank Sam Creasey for help with the arm drivers, Russ Tedrake for guidance in the early stages of the project, and Nicholas Pfaff for help with the grippers. The rendered figures in this paper were produced using Drake Blender Tools~\cite{pfaff2025drakeblender}.

%% file: sections/appendix.tex
\subsection{Nonconvexities in DBMP constraint enforcement}\label{app:dbmp_nonconvexity}
Higher-order derivative and continuity constraints are often crucial in practice, but they are hard to enforce jointly with a minimum-time objective. The core difficulty already appears for a single segment. After the normalization of~\Cref{sec:minimum_time}, the $i$th derivative constraint on a trajectory of duration $T$ reads
$$r^{(i)}(s)\in T^i\,\mathcal C_i,\quad i=1,\dots,I,\ s\in[0,1].$$
Even for convex $\mathcal C_i$, scaling by $T^i$ makes this nonconvex in $(r^{(i)}(s), T)$ for $i\geq 2$. Our change of variables $T_I=T^I$ removes this obstruction, but only by acting on a single, shared duration~(\Cref{sec:minimum_time}).

Multiple segments further complicate things. DBMPs give each segment $m$ its own duration $T_m$, so the change of variables must be applied per segment. This breaks the objective, and the continuity constraints then couple the durations nonlinearly. Matching the $i$th physical derivative at the junction of segments $m$ and $m+1$ requires
$$\frac{1}{T_m^i}\,r_m^{(i)}(\tfrac{m}{M})=\frac{1}{T_{m+1}^i}\,r_{m+1}^{(i)}(\tfrac{m}{M}),$$
a nonlinear equality. A single shared duration instead cancels the $\tfrac{1}{T^i}$ factors, reducing continuity to the linear constraint $r_m^{(i)}(\tfrac{m}{M})=r_{m+1}^{(i)}(\tfrac{m}{M})$.

\subsection{A recipe for computing the polar of convex sets in conic standard form}\label{app:conic_polar}

Let $\calO = \{x\inR^n \mid \exists y\inR^m : Ex + Fy + g \in \calK\}$ be a convex set described in conic standard form, where $E\inR^{p\times n}$, $F\inR^{p\times m}$, $g\inR^{p}$, and $\calK\subseteq\R^{p}$ is a closed convex cone with dual cone $\calK^*$ (see~\cite[\S2.6]{boyd2004convex}). The homogenization of $\calO$ reads
$$\tilde\calO = \{(z, t)\inR^{n+1} \mid \exists y : Ez + Fy + gt \in \calK,~ t \geq 0\},$$
and the polar $\calO^\circ$ is its dual cone $\tilde\calO^*$, see~\cite[\S2.1.2]{marcucci2024graphs}. We have $(a,b)\in\calO^\circ$ if and only if $$a^\top z + b t \geq 0~~\text{for all}~(z,t)\in\tilde\calO.$$

Therefore, the element $(a, b)$ is in the polar if the minimum of $a^\top z + b t$ subject to $Ez + Fy + gt \in \calK$, $t\geq 0$ is nonnegative. The Lagrangian of this minimization reads
$$\calL = z^\top(a - E^\top\nu) - y^\top F^\top\nu + t(b - g^\top\nu - \theta),$$
with $\nu\in\calK^*$ and $\theta\geq 0$. For the minimum over the primal variables $(z, y, t)$ to be finite, we require $a = E^\top\nu$, $F^\top\nu = 0$, and $b \geq g^\top\nu$ (absorbing $\theta$). The polar is therefore
\begin{equation}\label{eqn:conic_polar}
\calO^\circ = \left\{(a, b)\inR^{n+1} ~\middle|~
\begin{array}{l}
a = E^\top\nu,~ F^\top\nu = 0,\\
b \geq g^\top\nu,~ \nu \in \calK^*
\end{array}
\right\}.
\end{equation}
To formulate obstacle avoidance constraints for a given $\calO$, we can simply add a separating plane $(a,b)$ by introducing auxiliary decision variables $\nu$ and adding the constraints on the decision variables $(a, b, \nu)$ in~\eqref{eqn:conic_polar}.

In practice, it is useful to include a small step-back from the obstacles. Adding a positive constant $\varepsilon$ to the bound, $b \geq g^\top \nu + \varepsilon$, tightens the polar so that every resulting separating plane with nonzero magnitude keeps a small margin from the obstacle.



\subsection{Conservative collision checking for \bez curves}\label{app:collision_checking}
We check whether a \bez curve $B(s) = \sum_{d=0}^D \beta_d(s)\pi_d$ avoids an H-polyhedron obstacle $\calO = \{x\inR^n \mid Cx \leq d\}$ using the recursive subdivision procedure in~\Cref{alg:collision_check}. The algorithm is conservative: it never falsely certifies a colliding curve as collision-free. The key insight is that by~\Cref{prop:cvx_hull}, if all control points lie outside a single halfspace of $\calO$, the entire curve is guaranteed to avoid $\calO$. When this test is inconclusive, we split the curve in half via De Casteljau subdivision~\cite[\S2.4]{farouki1988algorithms}, and recursively run the same check on the two new pieces. This subdivision tightens the convex hull around the curve, making the test increasingly precise. We terminate if either an endpoint of the curve is inside an obstacle, we find a single halfspace of the obstacle that contains none of the control points, or the segment is so small that all of its control points fit in a user-specified $\varepsilon$-ball. Although we describe the check for H-polyhedron obstacles, it extends directly to any convex obstacle that admits a supporting-hyperplane oracle, such as a sphere, by replacing the per-facet halfspace test with a single separating-plane test.
\begin{algorithm}
\SetAlgoLined
\caption{\textsc{CollisionFree}$(B, C, d, \varepsilon)$}
\label{alg:collision_check}
\SetKwInput{Input}{Input}
\SetKwInput{Output}{Output}
\Input{\bez curve $B$ with control points $\pi_0,\ldots,\pi_D$,\\ obstacle $\calO = \{x \mid Cx\leq d\}$, tolerance $\epsilon$}
\Output{\texttt{true} if $B$ is certified collision-free, \texttt{false} otherwise}
\If{$\pi_0 \in \calO$ \textbf{or} $\pi_D\in\calO$}{
    \Return \texttt{false}\tcp*{endpoint in collision}
}
\If{$\exists\, j$ s.t.\ $c_j^\top \pi_d > d_j$ for all $d = 0,\ldots, D$}{
    \Return \texttt{true}\tcp*{all ctrl pts outside halfspace $j$}
}
\If{bounding radius of $\{\pi_0,\ldots,\pi_D\} \leq \varepsilon$}{
    \Return \texttt{false}\tcp*{conservative: tolerance reached}
}
Split $B$ at midpoint into $B_L, B_R$\\
\Return \textsc{CollisionFree}$(B_L, C, d, \varepsilon)$ \textbf{and} \textsc{CollisionFree}$(B_R, C, d, \varepsilon)$
\end{algorithm}

\subsection{Comments on the maximum-margin planes}\label{app:max_margin_plane_dual}
Finding the maximum-margin separating planes corresponds to solving
\begin{subequations}
\label{eqn:max_margin_planes}
\begin{align}
\text{minimize} \quad & \int_{\calS_k} a_k(s)^\top r(s) + b_k(s)\,ds,\label{eqn:max_margin_planes_cost}\\
\text{subject to} \quad
&  (a_k(s), b_k(s)) \in \calO^\circ_k, \label{eqn:max_margin_planes_polar_cons}\\
& \| a_k \|_2 \leq 1\label{eqn:max_margin_planes_norm_cons},\\
& a_k(s)^\top r(s) + b_k(s)<0. \label{eqn:max_margin_planes_valid_cons}
\end{align}
\end{subequations}
In this section we derive the dual of~\eqref{eqn:max_margin_planes} and verify that an optimal solution to this primal-dual pair corresponds to the one stated in~\eqref{eqn:continuous_max_margin_planes_solution}.

First, we observe that the constraint~\eqref{eqn:max_margin_planes_valid_cons} is redundant since there are no continuity constraints on $(a,b)$. We then introduce auxiliary variables $z_0(s) = 1$ and $z_1(s) = a(s)$ to separate the norm constraint from the decision variable:
\begin{subequations}
\begin{align}
\text{minimize} \quad & \int_{\calS_k} a(s)^\top r(s) + b(s)\, ds, \\
\text{subject to} \quad & (a(s), b(s)) \in \calO^\circ, \\
& \|z_1(s)\|_2 \leq z_0(s), \\
& z_0(s) = 1, \\
& a(s) = z_1(s).
\end{align}
\end{subequations}

We can now write the Lagrangian
\begin{align}
\calL &= \int_{\calS_k} \Big[ a(s)^\top r(s) + b(s) \notag \\
& - \begin{bmatrix} \Xi_a(s)^\top & \Xi_b(s) \end{bmatrix} \begin{bmatrix} a(s) \\ b(s) \end{bmatrix} \notag \\
& - \begin{bmatrix} \Pi_0(s) & \Pi_1(s)^\top \end{bmatrix} \begin{bmatrix} z_0(s) \\ z_1(s) \end{bmatrix} + \nu(s)(z_0(s) - 1) \notag \\
& + \Gamma(s)^\top(a(s) - z_1(s)) \Big] ds
\end{align}
with dual variables $(\Xi_a(s), \Xi_b(s)) \in \tilde\calO$, $(\Pi_0(s), \Pi_1(s)) \in \calL_2$, $\nu(s) \in \mathbb{R}$, and $\Gamma(s) \in \mathbb{R}^n$. Here we used the fact that the dual of the polar $\calO^\circ$ is the homogenization $\tilde\calO$ of the obstacle~\cite[\S2.4.1]{marcucci2024graphs}, and the fact that the second-order cone $\calL_2$ is self-dual. Rearranging terms gives
\begin{align}
\calL &= \int_{\calS_k} \Big[ a(s)^\top(r(s) - \Xi_a(s) + \Gamma(s)) \notag \\
& + b(s)(1 - \Xi_b(s)) + z_1(s)^\top(\Pi_1(s) - \Gamma(s)) \notag \\
& + z_0(s)(\nu(s) - \Pi_0(s)) - \nu(s) \Big]\, ds.
\end{align}

For the minimization of $\calL$ over the primal variables to have a finite value, sufficient conditions for all $s \in \calS_k$ are
\begin{align}
\Xi_a(s) &= r(s) + \Gamma(s), \label{eqn:max_margin_stat_xia}\\
\Xi_b(s) &= 1, \label{eqn:max_margin_stat_xib}\\
\Pi_1(s) &= \Gamma(s), \label{eqn:max_margin_stat_pi1}\\
\nu(s) &= \Pi_0(s). \label{eqn:max_margin_stat_nu}
\end{align}

The dual is therefore
\begin{subequations}
\label{eqn:max_margin_planes_dual}
\begin{align}
\text{maximize} \quad & \int_{\calS_k} -\nu(s)\, ds, \\
\text{subject to} \quad & \|\Gamma(s)\|_2 \leq \nu(s), \\
& r(s) + \Gamma(s) \in \calO_k,
\end{align}
\end{subequations}
where the first constraint follows from $(\Pi_0, \Pi_1) \in \calL_2$ together with~\eqref{eqn:max_margin_stat_pi1} and~\eqref{eqn:max_margin_stat_nu}, and the second follows from $(\Xi_a, \Xi_b) \in \tilde\calO$ together with~\eqref{eqn:max_margin_stat_xia} and~\eqref{eqn:max_margin_stat_xib} (since $(\cdot, 1) \in \tilde\calO$ is equivalent to $(\cdot) \in \calO$). Since $\nu(s) \geq \|\Gamma(s)\|_2$ and the dual maximizes $-\nu$, the optimal choice is $\nu(s) = \|\Gamma(s)\|_2$, so the dual is equivalent to
\begin{subequations}
\begin{align}
\text{minimize} \quad & \int_{\calS_k} \|\Gamma(s)\|_2\, ds, \\
\text{subject to} \quad & r(s) + \Gamma(s) \in \calO_k.
\end{align}
\end{subequations}
This is a pointwise projection problem: the optimal $\Gamma^\star(s)$ is the displacement from $r(s)$ to its closest point in $\calO_k$, i.e., $\Gamma^\star(s) = r^\star(s) - r(s)$ where $r^\star(s) = \mathrm{proj}_{\calO_k}(r(s))$.

The corresponding optimal primal solution is
\begin{align}
a^\star(s) &= \frac{r^\star(s) - r(s)}{\|r^\star(s) - r(s)\|_2}, \\
b^\star(s) &= -a^{\star\top}(s)\, r^\star(s),
\end{align}
which is the supporting hyperplane of $\calO_k$ at $r^\star(s)$ with normal pointing toward $\calO_k$.

We now verify optimality via the KKT conditions. Stationarity is already encoded in~\eqref{eqn:max_margin_stat_xia}--\eqref{eqn:max_margin_stat_nu}. Dual feasibility of $\Gamma^\star$ holds since $r(s) + \Gamma^\star(s) = r^\star(s) \in \calO_k$. Primal feasibility of $(a^\star, b^\star)$ follows from convexity of $\calO_k$: the projection theorem gives $(x - r^\star(s))^\top(r(s) - r^\star(s)) \leq 0$ for all $x \in \calO_k$, which rearranges to $a^{\star\top}(s)\,x + b^\star(s) \geq 0$, confirming $(a^\star(s), b^\star(s)) \in \calO^\circ_k$. It remains to check complementary slackness on the four dual-variable terms. The two equality-constraint terms ($\nu$ and $\Gamma$) are trivially zero since $z_0^\star = 1$ and $a^\star = z_1^\star$ hold exactly. For the polar-cone term:
\begin{align}
    \Xi_a^{\star\top}(s)\,a^\star(s) + \Xi_b^\star(s)\,b^\star(s) = r^{\star\top}(s)\,a^\star(s) + b^\star(s) = 0,
\end{align}
since $b^\star(s) = -a^{\star\top}(s)\,r^\star(s)$. For the SOC term, we substitute $\Pi_0^\star(s) = -\|\Gamma^\star(s)\|_2$, $\Pi_1^\star(s) = \Gamma^\star(s) = r^\star(s)-r(s)$, $z_0^\star = 1$, and $z_1^\star(s) = a^\star(s)$:
\begin{align}
    &\Pi_0^\star(s)\,z_0^\star + \Pi_1^{\star\top}(s)\,z_1^\star(s)=
    \\ & -\|r^\star(s)-r(s)\|_2 + \frac{(r^\star(s)-r(s))^\top(r^\star(s)-r(s))}{\|r^\star(s)-r(s)\|_2} = 0.
\end{align}
This confirms that the proposed primal-dual pair satisfies all KKT conditions and is therefore optimal.

The finite-dimensional program~\eqref{eqn:max_margin_planes_discrete} solved in practice replaces this pointwise objective with the supremum of the separation value over each segment, optimized over polynomial planes of fixed degree. An integral cost over the segment is equally valid, and the two objectives coincide in the limit of infinitely many independent segments, where the discrete planes recover the closed-form solution above. With finite-degree planes, however, we found the supremum cost to perform substantially better in practice. The discrete program also retains the feasibility property used in our convergence argument: whenever $r$ is collision-free, its optimal separation value is nonpositive, so the returned planes keep $r$ feasible.

\subsection{Modifications and variants of the trust region updates for Fast Path Planning}
\label{app:fpp}

The smooth phase of the Fast Path Planning algorithm~\cite{marcucci2024fast} minimizes the snap of a smooth trajectory traversing a fixed sequence of safe boxes by alternating between a tangent and a projection step. We can simply replace the objective with the sum of the segment times and add the additional derivative constraints without breaking the convexity of either of the two steps. The projection step computes the Bézier control points at fixed segment durations, and the tangent step now solves an SOCP that lowers the total duration under a per-segment trust region $1/(1+\kappa)\le T_j/\bar T_j\le 1+\kappa$. Both FPP variants in~\Cref{tab:village} use the same $70$-box corridor and differ only in how the trust region is initialized and updated, and in when the alternation stops.

\paragraph{Warm-starting the smooth phase}
We initialize the smooth phase with per-segment durations derived from the polygonal path. We first remove sliver segments, the short corners where the shortest path clips a box over a negligible distance, by dropping the clipped box and reconnecting its two neighboring edges through their intersection, which keeps the path inside the corridor. We then merge consecutive collinear segments into maximal straight runs, extending a run as long as its chord stays within the safe boxes. For each run, we solve our convex minimum-time formulation~\eqref{eqn:convex_min_time}, without the obstacle-avoidance constraints, between its endpoints under the velocity, acceleration, jerk, and snap limits, and split the resulting timing at the original vertices to recover the per-segment durations.

\paragraph{Original} The reference schedule contracts the trust region every iteration, $\kappa\leftarrow\kappa/\omega$, starting from $\kappa_0=1$ with $\omega=3$, and stops the first time the tangent step's predicted relative improvement falls below $10^{-2}$. Because $\kappa$ can never re-expand, the alternation stalls at a high-cost duration of $25.78$~s, reached in $1.05$~s of smooth-phase computation.

\paragraph{Modified} We instead let the trust region re-expand on accepted steps, growing it as $\kappa\leftarrow\min(1.7\,\kappa,\,\kappa_{\max})$ with $\kappa_{\max}=2$ and shrinking it as $\kappa\leftarrow\kappa/2$ on rejected steps. Starting from $\kappa_0=0.5$, every step is accepted, so $\kappa$ opens to the cap within a few iterations and stays there. On its own this reaches the optimum but converges slowly, so we add two further changes. First, a backtracking line search on the tangent step: when the linearized step over-shrinks the durations and the subsequent projection is infeasible, we backtrack along the step to its largest feasible fraction, up to four times, instead of discarding it. Second, an earlier termination at the diminishing-returns elbow, defined as $3$ consecutive iterations that improve the duration by less than $10^{-2}$. Together these reach an $11.68$~s trajectory in $20$ iterations and $10.28$~s of smooth-phase computation. Both times exclude the $1.64$~s of box preprocessing shared by the two variants.

\subsection{Nonlinear baseline}\label{app:nonlinear_baselines}

The nonlinear baseline solves the minimum-time problem~\eqref{eq:statement} directly as a nonlinear program, using the same \bez spline parameterization as our approach. We describe the common formulation first and then the collision-avoidance strategy.

\paragraph{Common formulation}
We represent the normalized trajectory $r$ as a \bez spline with $M$ segments, each a degree-$D$ curve with control points $\pi_d^{(m)} \in \mathbb{R}^n$ for $d = 0, \ldots, D$ and $m = 1, \ldots, M$. Unlike our approach, the baseline keeps a single segment duration $T > 0$ rather than $T_I = T^I$, so the total trajectory duration is $MT$.

To enforce the derivative constraints~\eqref{eq:statement_derivative}, we introduce auxiliary normalized velocity control points $p^v_{m,d} \in \mathbb{R}^n$ for $d = 0, \ldots, D-1$, and normalized acceleration control points $p^a_{m,d} \in \mathbb{R}^n$ for $d = 0, \ldots, D-2$. By~\Cref{prop:derivative}, the $d$th control point of the normalized velocity curve on segment $m$ is $D(\pi_{d+1}^{(m)} - \pi_d^{(m)})$. Relating this to the actual velocity $q^{(1)} = r'/T$ gives the bilinear equality
\begin{align}
    D\bigl(\pi_{d+1}^{(m)} - \pi_d^{(m)}\bigr) = T\, p^v_{m,d}, \quad d = 0, \ldots, D-1.
    \label{eqn:baseline_vel_bilinear}
\end{align}
Applying~\Cref{prop:derivative} again to the velocity curve (whose control points are $T p^v_{m,d}$) and relating to the actual acceleration $q^{(2)} = r''/T^2$ gives
\begin{align}
    (D-1)\bigl(p^v_{m,d+1} - p^v_{m,d}\bigr) = T\, p^a_{m,d}, \quad d = 0, \ldots, D-2.
    \label{eqn:baseline_acc_bilinear}
\end{align}
The derivative constraints then become linear: $p^v_{m,d} \in \mathcal{C}_1$ and $p^a_{m,d} \in \mathcal{C}_2$. $C^1$ and $C^2$ continuity between segments is enforced by equating the last velocity (resp.\ acceleration) control point of segment $m$ with the first of segment $m+1$. The baseline therefore solves
\begin{subequations}
\label{eqn:nonlinear_baselines_common}
\begin{align}
\text{minimize} \quad & T \\
\text{subject to} \quad
& T > 0, \\
& \text{boundary conditions~\eqref{eq:statement_boundary_configuration},~\eqref{eq:statement_boundary_derivative},}\\
&~\eqref{eqn:baseline_vel_bilinear},~~\eqref{eqn:baseline_acc_bilinear}, \\
& p^v_{m,d} \in \mathcal{C}_1,~ p^a_{m,d} \in \mathcal{C}_2, \\
& C^1, C^2 \text{ continuity}, \\
& \text{(collision avoidance, see below),}
\end{align}
\end{subequations}
with the nonlinear program solved by the off-the-shelf solver SNOPT~\cite{gill2005snopt}.

\paragraph{Collision avoidance}
We enforce collision avoidance at sampled points along each segment. For each segment $m$ and each of $J$ uniformly-spaced sample parameters $s_j \in [0,1]$, we introduce an auxiliary variable $q_j^{(m)} \in \mathbb{R}^n$ with the linear equality
\begin{align}
    q_j^{(m)} = r_m(s_j) = \sum_{d=0}^{D} \beta_d(s_j)\,\pi_d^{(m)},
\end{align}
and enforce collision avoidance at each sample point via Drake's \texttt{MinimumDistanceLowerBoundConstraint}~\cite{drake}. This constraint evaluates the signed distances between all geometry pairs registered in the scene graph at configuration $q_j^{(m)}$, and enforces that the minimum signed distance is non-negative using a smooth penalty formulation.

\subsection{Bin unloading experiment details}\label{app:pallet_randomization}

\paragraph{Package randomization}
Each random instance places 6 packages (3 red, 3 blue) in the bin. Package dimensions are sampled independently and uniformly from $[8, 10, 3]\,\text{cm}$ to $[12, 12, 13]\,\text{cm}$ (width $\times$ depth $\times$ height). The 2D position of each package in the bin is sampled uniformly within $95\%$ of the available bin width and $90\%$ of the available bin depth. Packages are placed on the bin floor at the appropriate height. Placement is generated by rejection sampling until all packages are mutually separated by at least $5.5\,\text{cm}$.

After placement, packages are sorted by their distance to the bin center and assigned colors (red / blue) by selecting uniformly at random from the subset of red/blue assignments (3 of each) for which all package grasp points lie within the respective arm's reachable workspace, with a $5\,\text{cm}$ inset from the workspace boundary. Instances for which no valid assignment exists are rejected and resampled.

\paragraph{Inverse kinematics}
We obtain an initial joint configuration at each waypoint using the analytical IK solver of~\cite{HeLiu2021}, which provides closed-form solutions for the Franka Emika Panda, and select a collision-free configuration for both arms by discretizing the redundant elbow angle. This initial configuration initializes the differential IK controller that tracks the optimized task space trajectory, described in~\Cref{app:diffik}.

\subsection{Differential IK for task space trajectory tracking}\label{app:diffik}

The planner emits a 6D task space trajectory $r(t) = [r_\mathrm{red}(t), r_\mathrm{blue}(t)] \in \mathbb{R}^6$ specifying the position of each gripper. We convert $r$ to a 14-DoF joint trajectory $q(t) \in \R^{14}$ by integrating two independent per-arm differential inverse-kinematics (IK) controllers. The same procedure is used in simulation and on hardware.

We index arm $a \in \{\mathrm{red}, \mathrm{blue}\}$ and write $p_a(q_a) \inR^3$, $R_a(q_a) \in \mathrm{SO}(3)$ for the world-frame position and rotation of \texttt{panda\_link8}, and $J_a(q_a) \in \R^{6 \times 7}$ for its spatial Jacobian. At time $t$ the commanded spatial velocity is $\xi_a^\mathrm{cmd}(t) = [\omega_a^\mathrm{cmd}(t);\, v_a^\mathrm{cmd}(t)] \inR^6$ with translational feedback that follows the planner reference and rejects accumulated position drift,
\begin{align}
v_a^\mathrm{cmd}(t) &= \dot r_a(t) + k_p\bigl(r_a(t) - p_a(q_a)\bigr),\\
\omega_a^\mathrm{cmd}(t) &= k_o\,\theta\,\hat e,\quad (\hat e, \theta) = \mathrm{axisangle}\!\bigl(R_a^\star R_a(q_a)^{\top}\bigr),
\end{align}
where $R_a^\star = R_a(q_a^\mathrm{init})$ is the fixed orientation of \texttt{panda\_link8} at the arm's initial configuration $q_a^\mathrm{init}$, held constant for the duration of the motion. The joint velocity is then obtained by solving the convex QP
\begin{align}
\dot q_a^\star = \mathop{\text{minimize}}_{\dot q_a \in \R^7} \quad &
    w_\mathrm{damp}\|\dot q_a\|_2^2 + w_\mathrm{reg}\bigl\|\dot q_a - k_q (q_a^\mathrm{ref} - q_a)\bigr\|_2^2 \notag \\
\mathrm{s.t.}\quad & J_a(q_a)\,\dot q_a = \xi_a^\mathrm{cmd}(t).
\label{eqn:diffik_qp}
\end{align}
The equality constraint enforces the spatial velocity command exactly; the first cost damps the joint velocity and the second drags its null-space component toward a virtual spring on a nominal reference $q_a^\mathrm{ref}$ (the home configuration). We use $k_p = 35$, $k_o = 5$, $k_q = 0.1$, $w_\mathrm{damp} = 10^{-3}$, $w_\mathrm{reg} = 10^{-1}$, and step $q_a$ forward with explicit Euler at $\Delta t = 5\,\mathrm{ms}$. The configuration is reset to the live robot state at the start of every motion phase, so position-feedback drift does not accumulate across phases. For hardware execution, the integrated $q(t)$ samples are fit with a cubic shape-preserving spline whose analytic derivatives are streamed to the Franka driver at $400\,\mathrm{Hz}$ (\Cref{app:armdriver}).

\subsection{Low-level arm control}\label{app:armdriver}
Each arm is moved by an independent instance of our open-source Franka driver. The driver runs on a dedicated real-time process and exchanges joint commands and status with the experiment over its own LCM bus, one bus per arm. The joint trajectory $q(t)$ of~\Cref{app:diffik} is sampled and published as a stream of joint-position commands at $400\,\mathrm{Hz}$. Each driver tracks its command stream inside the \texttt{libfranka} Franka Control Interface (FCI) loop at $1\,\mathrm{kHz}$. Internally it runs a joint-space impedance controller. It smooths the commanded position with a first-order low-pass filter of cutoff frequency $70\,\mathrm{Hz}$, finite-differences the result for a velocity feedforward and smooths it with a first-order low-pass filter of cutoff frequency $30\,\mathrm{Hz}$, and applies the torque
\begin{equation}
\begin{aligned}
\tau = {}& -K_p\,(q - q^\mathrm{cmd}) - K_d\,(\dot q - \dot q^\mathrm{cmd}) \\
         & + M(q)\,\ddot q^\mathrm{cmd} + c(q, \dot q),
\end{aligned}
\end{equation}
where $M(q)$ is the arm inertia matrix and $c(q, \dot q)$ the Coriolis term. Note that FCI compensates gravity internally. The fixed diagonal gains are $K_p = \mathrm{diag}(962.5, 1155, 1155, 962.5, 192.5, 385, 96.25)\,\mathrm{Nm/rad}$ and $K_d = \mathrm{diag}(37.5, 50, 37.5, 25, 5, 3.75, 2.5)\,\mathrm{Nm\,s/rad}$. A command-expiry watchdog of $10\,\mathrm{ms}$ zeros the commanded velocity if the stream stalls, and \texttt{libfranka}'s rate limiter caps joint velocity and acceleration. The two arms run as separate driver processes on separate buses and are coordinated only through the shared planned trajectory.

\subsection{Waypoint planner}\label{app:WaypointPlanner}
The waypoint planner is a hand-coded baseline that produces a piecewise-linear path in the $6$D task space and retimes each segment with an independent per-segment minimum-time convex program. We use it both as a standalone baseline (\emph{Waypoint}) and as the warm-start polygon supplied to BMTP and EI+SCS throughout this paper.

\paragraph{Path generation}
For each motion phase the planner emits a coarse sequence of $6$D waypoints encoding a vertical lift above the obstacles, an X-side-step to a ``ferry lane'' just inside the bin edge, a Y-translation toward the target zone, and a final descent to the goal. The held-arm (\emph{forward}) variant collapses the final X- and Y-translations into a single diagonal segment, while the empty-arm (\emph{approach}) variant uses an axis-aligned X-adjust after the Y-translation to align with the brick to be grasped. By construction all but at most one segment per phase is axis-aligned, which keeps the path far from the boundaries of the surrounding box obstacles and gives the downstream optimizers a feasible initialization with margin. An additional Y-clearing waypoint is inserted whenever the two arms start with crossed X-positions and a Y-separation tighter than the gripper width, to avoid an inter-gripper collision while the arms swap columns. In all $50$ phases of our hardware experiments this construction yields $6$ waypoints, i.e., $5$ piecewise-linear segments per phase.

\paragraph{Per-segment minimum-time retiming}
Given the waypoint polygon $\{w_0, \dots, w_M\} \subset \R^6$ we fit a \bez spline by retiming each line segment $\overline{w_m w_{m+1}}$ independently. For each segment we solve the convex per-segment minimum-time program from~\cite[\S VI]{marcucci2025biconvex}, specialized to a single $1$D direction $u_m = (w_{m+1}-w_m)/\|w_{m+1}-w_m\|_2$: the $6$D velocity and acceleration set constraints reduce to tight $1$D directional bounds $v_{\max}^m,\,a_{\max}^m,\,a_{\min}^m$ derived from the velocity and acceleration sets $\mathcal V, \mathcal A$, and the segment duration is recovered through the rotated-cone time-scaling reformulation of~\cite[\S VI]{marcucci2025biconvex}. We additionally constrain the first and second derivatives of the $1$D arc-length control points to vanish at both endpoints of every segment, which makes the spline obtained by stitching segments back-to-back $C^2$ at every waypoint. Each program is solved with Clarabel~\cite{goulart2026clarabel}, and the resulting per-segment \bez curves are concatenated into a single composite curve with a contiguous time domain. We use degree $6$ \bez segments on both the hardware experiments and the warm starts shared with BMTP and EI+SCS.

\subsection{EI + SCS implementation details}\label{app:SCSbaseline}

The EI+SCS baseline~\cite{marcucci2025biconvex} optimizes a minimum-time \bez trajectory through a sequence of convex safe sets. We describe how we construct these sets from the waypoint initialization.

\paragraph{Region generation}
Since all obstacles in our task space formulation are convex (axis-aligned box overlaps), we can solve the edge inflation exactly. For each segment $m$ of the piecewise-linear waypoint path and each obstacle $\calO_k$, we solve the convex program
\begin{align}
    \mathop{\text{minimize}}_{x \in \calO_k,\; z \in \calL_m} \;\; \|x - z\|_2^2,
\end{align}
where $\calL_m = \conv\{v_m, v_{m+1}\}$ is the $m$th line segment, following~\cite[Program~8]{werner2025superfast}. Since both $\calO_k$ and $\calL_m$ are convex, this is a convex program. The optimal solution $(x^\star, z^\star)$ yields a separating hyperplane via the gradient of the distance function:
\begin{align}
    a = \frac{x^\star - z^\star}{\|x^\star - z^\star\|_2}, \quad \calH = \{x \mid a^\top x \leq a^\top x^\star - \delta\},
\end{align}
with safety margin $\delta = 10^{-3}$. We solve all obstacle-segment pairs in parallel using Drake's Solve, producing one polytope per segment by intersecting the resulting halfspaces.

\paragraph{Handling degenerate regions}
The biconvex solver~\cite{marcucci2025biconvex} requires that (i) consecutive waypoints are separated by a non-negligible distance, and (ii) non-adjacent regions are disjoint (Assumption~1 in~\cite{marcucci2025biconvex}). In both settings we first prune segments whose edge length falls below $10^{-4}$. In the simulation experiments, we then resolve any remaining overlap between non-adjacent regions $i$ and $i+2$ by removing the intervening region $i+1$, repeating until no non-adjacent pair overlaps. On hardware this proved insufficient, so we instead separate each overlapping pair $(i, i+2)$ directly: we add to both regions a hyperplane whose normal is the direction of the intervening segment $i+1$, placed at its midpoint, leaving a strip of width $2\times10^{-4}$ that belongs to neither region. This guarantees that non-adjacent regions are disjoint while keeping each region around its own segment.